\def\displaytitle{On the Sensitivity of Adversarial Robustness to Input Data Distributions}
\documentclass{article} %
\usepackage{iclr2019_conference,times}

\usepackage{url}
\usepackage{graphicx,wrapfig,lipsum,framed}
\usepackage{subcaption}
\usepackage{verbatim}
\usepackage{amsmath}
\usepackage{amssymb}
\usepackage{amsthm}
\usepackage{amsfonts}
\usepackage{parskip}
\usepackage{paralist}
\usepackage{booktabs}
\usepackage{caption}
\usepackage{multirow}
\usepackage{multicol}

\usepackage[utf8]{inputenc}
\usepackage[T1]{fontenc}

\usepackage{makecell}

\usepackage{mathrsfs}

\usepackage{bm}

\usepackage{dirtytalk}
\usepackage[inline]{enumitem}
\setlength\parindent{0pt} %
\setlength\parskip{5pt} %
\setitemize{noitemsep,topsep=0pt,parsep=0pt,partopsep=0pt}

\usepackage{listings}
\usepackage{color}

\definecolor{codegreen}{rgb}{0,0.6,0}
\definecolor{codegray}{rgb}{0.5,0.5,0.5}
\definecolor{codepurple}{rgb}{0.58,0,0.82}
\definecolor{backcolour}{rgb}{0.95,0.95,0.92}

\lstdefinestyle{mystyle}{
    backgroundcolor=\color{backcolour},
    commentstyle=\color{codegreen},
    keywordstyle=\color{magenta},
    numberstyle=\tiny\color{codegray},
    stringstyle=\color{codepurple},
    basicstyle=\sffamily\footnotesize,
    breakatwhitespace=false,
    breaklines=true,
    captionpos=b,
    keepspaces=false,
    numbers=none,
    numbersep=5pt,
    showspaces=false,
    showstringspaces=false,
    showtabs=false,
    tabsize=2
}

\lstset{style=mystyle}

\newcommand{\Prob}{\mathbb{P}}
\newcommand{\what}[1]{\widehat{#1}}
\newcommand{\wxp}{\what{x}^{(p)}}

\usepackage[disable,backgroundcolor = White,textwidth=\marginparwidth]{todonotes}

\presetkeys%
{todonotes}%
{inline}{}

\DeclareMathOperator{\Vol}{\mathrm{Vol}}
\newcommand{\eps}{\varepsilon}

\usepackage{natbib}

\usepackage[hidelinks]{hyperref}
\hypersetup{
colorlinks=true,
citecolor=blue,
}

\newtheorem{theorem}{Theorem}[section]
\newtheorem{proposition}{Proposition}[section]

\newtheorem{remark}{Remark}[section]

\usepackage{amsmath,amsfonts,bm}

\def\eqref#1{equation~\ref{#1}}

\def\1{\bm{1}}

\def\eps{{\epsilon}}

\DeclareMathAlphabet{\mathsfit}{\encodingdefault}{\sfdefault}{m}{sl}
\SetMathAlphabet{\mathsfit}{bold}{\encodingdefault}{\sfdefault}{bx}{n}

\usepackage[capitalise,noabbrev]{cleveref}
\creflabelformat{equation}{#2(#1)#3}
\crefrangelabelformat{equation}{#3(#1)#4 to #5(#2)#6}
\labelcrefformat{subequation}{#2(#1)#3}
\labelcrefrangeformat{subequation}{#3(#1)#4 to #5(#2)#6}
\crefname{example}{Example}{Examples}
\crefname{proposition}{Proposition}{Propositions}

\usepackage{footmisc}

\title{\displaytitle}

\author{Gavin Weiguang Ding, ~Kry Yik Chau Lui, ~Xiaomeng Jin, ~Luyu Wang, ~Ruitong Huang\\
~~~~~~~~~~~~~~~~~~~~~~~~~~~~~~~~~~~~~~~~~~~~~~~~~~~~~~~~~~~~~~~~~~~~Borealis AI\\
~~~~~~~~~~~~~~~~~~~~~~~~~~~~~~~~~~~~~~~~~~~~~~~~~~~~~~~~~~~~~~~~~~~~~~~Canada\\
}

\renewcommand{\comment}[1]{}

\iclrfinalcopy %

\begin{document}

\maketitle %

\graphicspath{ {./plots_results/} }

\def\figvspace{-0.2cm}
\def\presecvspace{-0.25cm}
\def\postsecvspace{-0.3cm}
\def\presubsecvspace{-0.2cm}
\def\postsubsecvspace{-0.2cm}
\def\halfpresubsecvspace{-0.1cm}
\def\halfpostsubsecvspace{-0.1cm}
\def\eqvspace{-0.15cm}

\begin{abstract}
Neural networks are vulnerable to small adversarial perturbations.
Existing literature largely focused on understanding and mitigating the vulnerability of learned \emph{models}.
In this paper, we demonstrate an intriguing phenomenon about the most popular robust training method in the literature, adversarial training:
Adversarial robustness, unlike clean accuracy, is sensitive to the \emph{input data distribution}.
Even a semantics-preserving transformations on the input data distribution can cause
a significantly different robustness for the adversarial trained model that is both trained and evaluated on the new distribution.
Our discovery of such sensitivity on data distribution is based on a study which disentangles the behaviors of clean accuracy and robust accuracy of the Bayes classifier.
Empirical investigations further confirm our finding.
We construct semantically-identical variants for MNIST and CIFAR10 respectively, and show that standardly trained models achieve comparable clean accuracies on them, but adversarially trained models achieve significantly different robustness accuracies.
This counter-intuitive phenomenon indicates that input data distribution alone can affect the adversarial robustness of trained neural networks, not necessarily the tasks themselves.
Lastly, we discuss the practical implications on evaluating adversarial robustness, and make initial attempts to understand this complex phenomenon.
\end{abstract}

\vspace{\presecvspace}
\section{Introduction}
\vspace{\postsecvspace}
Neural networks have been demonstrated to be vulnerable to adversarial examples \citep{szegedy2013intriguing,biggio2013evasion}.
Since the first discovery of adversarial examples, great progress has been made
in constructing stronger adversarial attacks \citep{goodfellow2014explaining,moosavi2016deepfool,madry2017towards,carlini2017towards}.
In contrast, defenses fell behind in the arms race
\citep{carlini2016defensive, athalye2017synthesizing, athalye2018obfuscated}.
Recently a line of works have been focusing on understanding the difficulty in achieving adversarial robustness from the perspective of data distribution.
In particular, \citet{tsipras2018robustness} demonstrated the inevitable tradeoff between robustness and clean accuracy in some particular examples.
\citet{schmidt2018adversarially} showed that the sample complexity of ``learning to be robust'' learning could be significantly higher than that of ``learning to be accurate''.

\nocite{raghunathan2018certified,sinha2017certifiable}

In this paper, we contribute to this growing literature from a new angle, by studying the relationship between adversarial robustness and the input data distribution.
We focus on the adversarial training method, arguably the most popular defense method so far due to its simplicity, effectiveness and scalability \citep{goodfellow2014explaining,huang2015learning,kurakin2016adversarial,madry2017towards,erraqabi2018a3t}.
Our main contribution is the finding that adversarial robustness is highly sensitive to the input data distribution:
\begin{center}
	\emph{A semantically-lossless shift on the data distribution could result in a drastically different robustness for adversarially trained models.}
\end{center}
Note that this is different from the transferability of a fixed model that is trained on one data distribution but tested on another distribution.
Even retraining the model on the new data distribution may give us a completely different adversarial robustness on the same new distribution.
This is also in sharp contrast to the clean accuracy of standard training, which, as we show in later sections, is insensitive to such shifts.
To our best knowledge, our paper is the first work in the literature that demonstrates such sensitivity.

Our investigation is motivated by the empirical observations on the MNIST dataset and the CIFAR10 dataset.
In particular, while comparable SOTA clean accuracies (the difference is less than 3\%) are achieved by MNIST and CIFAR10 \citep{gastaldi2017shake},
CIFAR10 suffers from much lower achievable robustness than MNIST in practice.\footnote{After PGD adversarial training \citep{madry2017towards},
MNIST has 89.3\% accuracy under perturbation $\delta$ with $\|\delta\| _\infty \le 0.3$ under the strongest PGD attack, but CIFAR10 has only 45.8\% accuracy, even under a much smaller $\delta$ with $\|\delta\| _\infty \le 8/255$.}
Results of this paper consist of two parts. First in theory,
we start with analyzing the difference between the regular Bayes error and the robust error, and show that the regular Bayes error is invariant to invertible transformations of the data distribution, but the robust error is not.
We further prove that if the input data is uniformly distributed, then the perfect decision boundary cannot be robust.
However, we also manage to find a robust model for the binarized MNIST dataset (semantically almost identical to MNIST, later described in \cref{sec:exp_robustness}).
The certification method by \citet{wong2018provable} guarantees that this model achieves at most 3\% robust error.
Such a sharp contrast suggests the important role of the data distribution in adversarial robustness,
and leads to our second contribution on the empirical side:
we design a series of augmented MNIST and CIFAR10 datasets to demonstrate the sensitivity of adversarial robustness to the input data distribution.

Our finding of such sensitivity raises the question of how to properly evaluate adversarial robustness.
In particular, the sensitivity of adversarial robustness suggests that certain datasets may not be sufficiently representative when benchmarking different robust learning algorithms.
It also raises serious concerns about the deployment of believed-to-be-robust training algorithm in a real product.
In a standard development procedure,
various models (for example different network architectures) would be prototyped and measured on the existing data.
However, the sensitivity of adversarial robustness makes the truthfulness of the performance estimations questionable,
as one would expect future data to be slightly shifted.
We illustrate the practical implications in \cref{sec:implications} with two practical examples:
1) the robust accuracy of PGD trained model is sensitive to gamma values of gamma-corrected CIFAR10 images. This indicates that image datasets collected under different light conditions may have different robustness properties;
2) both as a ``harder'' version of MNIST, the fashion-MNIST \citep{xiao2017fashion} and edge-fashion-MNIST (an edge detection variant described in Section \ref{sec:fmnist_example}) exhibit completely different robustness characteristics.
This demonstrates that different datasets may give completely different evaluations for the same algorithm.

Finally, our finding opens up a new angle and provides novel insights to the adversarial vulnerability problem, complementing several recent works on the issue of data distributions' influences on robustness.
\citet{tsipras2018robustness} hypothesize that there is an intrinsic tradeoff between clean accuracy and adversarial robustness.
Our studies complement this result, showing that there are different levels of tradeoffs depending on the characteristics of input data distribution, under the same learning settings (training algorithm, model and training set size).
\citet{schmidt2018adversarially} show that different data distributions could have drastically different properties of adversarially robust generalization, theoretically on Bernoulli vs mixtures of Gaussians,  and empirically on standard benchmark datasets.
From the sensitivity perspective, we demonstrate that being from completely different distributions (e.g. binary vs Gaussian or MNIST vs CIFAR10) may not be the essential reason for having large robustness difference. Gradual semantics-preserving transformations of data distribution can also cause large changes to datasets' achievable robustness.
We make initial attempts in \cref{sec:attempts} to further understand this sensitivity.
We investigated perturbable volume and inter-class distance as the natural causes of the sensitivity; model capacity and sample complexity as the natural remedies. However, the complexity of the problem has so far defied our efforts to give a definitive answer.

\vspace{\presubsecvspace}
\subsection{Notation and problem setup}
\vspace{\postsubsecvspace}
\newcommand{\cH}{\mathcal{H}}
\newcommand{\cX}{\mathcal{X}}
\newcommand{\cY}{\mathcal{Y}}
\newtheorem{example}{Example}

We specifically consider the image classification problem where the input data is inside a high dimensional unit cube.
We denote the data distribution as a joint distribution $\Prob(x,y)$, where $x \in [0,1]^d$, $d$ is the number of pixels, and $y\in \{1,2, \ldots, k\}$ is the discrete label.
We assume the support of $x$ is the whole pixel space $[0,1]^d$.
When $x$ is a random noise (or human perceptually unclassifiable image), one can think of $\Prob(y \,\vert\, x)$ being closed to uniform distribution on labels.
In the standard setting, the samples $(x_i, y_i)$ can be interpreted as $x_i$ is independently sampled from the marginal distribution $\Prob(x)$, and then $y_i$ is sampled from $\Prob(x\,\vert\, x_i)$.
In this paper, we discuss $\Prob(x)$'s influences on adversarial robustness,
given a fixed $\Prob(y|x)$.

In our experiments, we only discuss the whitebox robustness,
as it represents the ``intrinsic'' robustness.
We use models learned by adversarially augmented training \citep{madry2017towards} (PGD training), which has the SOTA whitebox robustness.
We consider bounded $\ell_\infty$ attack as the attack for evaluating robustness for 2 reasons: 1) PGD training can defend against $\ell_\infty$ relatively well, while for other attacks, how to train a robust model is still an open question;
2) in the image domain $\ell_\infty$ attack is the mostly widely researched attack.

Let $\cH$ denote the universal set of all the measurable functions.
Given a joint distribution $\Prob(x,y)$ on the space $\cX \times \cY$,
we define the Bayes error $R^* = \inf_{h\in\cH} \mathbb{E}_{\Prob(x,y)} L(y; h(x)) = R^*(\Prob(x,y)) $, where $L$ is the objective function.
In other words, Bayes error is the error of the best possible classifier we can have, $h^*$, without restriction on the function space of classifiers.
We further define (adversarial) robust error
$RR(h) = \mathbb{E}_{\Prob(x,y)} \max_{ \|\delta\|_\infty < \epsilon } L(y; h(x + \delta)) = RR(\Prob(x,y)) $.
We denote $RR^* = RR(h^*)$ to be the robust error achieved by the Bayes classifier $h^*$.
For simplicity, we assume our algorithm can always learn $h^*$, which reduces clean accuracy to be $(1 -  \text{Bayes error})$, and robust accuracy of the Bayes classifier to be
$(1 -  RR^*)$.

\vspace{\presecvspace}
\section{Theoretical Analyses and Provable Cases}
\label{sec:theory_and_provable_cases}
\vspace{\postsecvspace}
As mentioned in the introduction,
although the SOTA clean accuracies are similar for MNIST and CIFAR10,
the robust accuracy on CIFAR10 is much more difficult to achieve, which indicates the different behaviors of the clean accuracy and robust accuracy.
The first result in this section is to further confirm this indication in a simple setting,
where the clean accuracy remains the same but the robust accuracy completely changes under a distribution shift.
Based on results from the concentration of measure literature, we further show that under uniform distribution, no algorithm can achieve good robustness, as long as they have high clean accuracy.
On the other hand, we examine the performance of a verifiable defense method on binarized MNIST (pixels values rounded to 0 and 1), and the result suggests the exact opposite: provable adversarial robustness on a MNIST-like dataset is achievable.
Such contrast thus suggests the important role of the data distribution in achieving adversarial robustness.

\vspace{\presubsecvspace}
\subsection{Disentangle Clean Accuracy and Robust Accuracy}
\vspace{\postsubsecvspace}

One immediate result is that Bayes error remains the same under any distribution shift induced by an injective map   $T:\, \cX \rightarrow \cX$.
To see that, simply note that $T^{-1}$ exists and $h^*\circ T^{-1}$ gives the same Bayes error for the shifted distribution.
However, such invariance property does not hold for the robust error of the Bayes classifier.
Furthermore, the following two examples show that Bayes error can have completely different behavior from its robust error. Although both examples have $0$ Bayes error, they have completely different robust errors.

\begin{example}
\label{exam:bayeserror}
Assume $x$ is uniformly distributed in $[0,1]^d$ and $y = 1$, for all $x$ with $x^{\top}e_1 > 1/2$ and $y = 0$, for $x^{\top}e_1 \le 1/2$, where $e_1$ is the one-hot vector. We use the 0-1 loss here.
Note that the Bayes error decision boundaries are given by the following hyperplane:
$
HP_1 = \{ x \in [0,1]^d : x_1 = 0 \},
$
and thus
\[
R^* = 0; \qquad RR^* = 2 \epsilon,
\]
under the budget $\|\delta\|_\infty < \epsilon$.
In this case,
the robust error is tolerable and relatively robust measured by the fraction of points that are successfully attacked, $2 \epsilon$.

Moreover, consider
an injective map $T$ which maps $\{x:\, x^{\top}e_1 > 1/2\}$ to $\{x: \, x^{\top} \textbf{1} > \frac{d}{2} \}$, and $\{x:\, x^{\top}e_1 \le 1/2\}$ to $\{x: \, x^{\top} \textbf{1} \le \frac{d}{2} \}$\footnote{Such map can be easily constructed.}.
The Bayes error on the new distribution remains $0$, as $T$ is invertible.
In contrast, the robust error is much worse. In fact, %
\[
RR^* \geq 1 - \frac{1}{4 d \epsilon^2}.
\]
\end{example}
\begin{remark}
Note that here the robust error of the Bayes classifier will grow to 1 as the dimensionality increases, for a fixed budget $\epsilon$.
\end{remark}

\vspace{\presubsecvspace}
\subsection{Difficulty in achieving robustness}
\label{subsec:difficultyinrobustness}
\vspace{\postsubsecvspace}
\cref{exam:bayeserror} shows that good clean accuracy does not necessary lead to good robust accuracy.
In contrast, we will show in this section that achieving a good robust accuracy is impossible given uniformly distributed data, as long as we ask for good clean accuracies.
Our tool are classical results from the concentration of measure \citep{ledoux2005concentration}.

Let $A_\eps := \{ x \in \mathbb{R}^N | \text{d}( x, A ) < \eps \} $ denote the $\eps$-neighborhood of the nonempty set $A$, where $\text{d}( x, A )$ is the distance from $x$ to the set $A$.
\cref{thm:CoM} provides a lower bound on the mass in $A_\eps$.

\begin{theorem}[Concentration of Measure on the Unit Cube and the Unit Ball]
	Let $ [0, 1]^d $ denote the unit $d$-cube and
    $B^d$ denote the Euclidean unit $d$-ball,
	both equipped with uniform probability distributions.
	Let $\eps > 0$.
	Then for any $A \subset [0, 1]^d$ with $\mathbb{P}(A) \geq 1/2 $,
	we have:
	\begin{align}
		\label{eqn:CoM_1}
		\mathbb{P}(A_{\eps})
		\geq
		\Phi ( \eps \sqrt{2 \pi } + \Phi^{-1}( \mathbb{P} (A) ) )
		\geq
		1 - e^{ - \pi \eps^2 }
	\end{align}
	For any $B \subset B^d$, with $ \mathbb{P}(B) \ge 1/2 $,
	\begin{align}
		\mathbb{P}(B_{\eps})
		\geq
		1 - \frac{1}{\mathbb{P}(B)} (1 - \delta_{\ell_2}(\eps))^{2d}
		\geq
		1 - \frac{1}{\mathbb{P}(B)} e^{- 2d (\frac{2 - \sqrt{3}}{3})  \eps^2 }
		\label{eqn:CoM_2}
	\end{align}
	where $\delta_{\ell_2}(\eps) = 1 - \sqrt{ 1 - \frac{\eps^2}{4} } $ and $\Phi$ is the standard normal cumulative distribution function.
\label{thm:CoM}
\end{theorem}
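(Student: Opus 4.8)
The plan is to treat the cube and the ball by two essentially different classical mechanisms, and in each case reduce the stated chain of inequalities to a one-line elementary estimate at the end. For the cube I would transport the \emph{sharp} Gaussian isoperimetric inequality onto $[0,1]^d$ through a Lipschitz pushforward; for the ball I would run a Brunn--Minkowski midpoint argument.

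For the cube, first I would introduce the coordinatewise Gaussian-CDF map $\Psi:\mathbb{R}^d \to [0,1]^d$, $\Psi(g)_i = \Phi(g_i)$. Two facts drive everything. First, since $\Phi(G_i)$ is uniform on $[0,1]$ when $G_i$ is standard normal, $\Psi$ pushes the standard Gaussian measure $\gamma_d$ forward to the uniform measure on $[0,1]^d$; in particular $\gamma_d(\Psi^{-1}(A)) = \mathbb{P}(A)$. Second, because $\sup_t \Phi'(t) = \Phi'(0) = 1/\sqrt{2\pi}$, the map $\Psi$ is $(1/\sqrt{2\pi})$-Lipschitz in Euclidean norm. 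Writing $\tilde A = \Psi^{-1}(A)$, the Lipschitz bound gives $\Psi(\tilde A_{\eps\sqrt{2\pi}}) \subseteq A_\eps$, hence $\mathbb{P}(A_\eps) \geq \gamma_d(\tilde A_{\eps\sqrt{2\pi}})$. Now the Gaussian isoperimetric inequality (Borell; Sudakov--Tsirelson), in the form $\gamma_d(\tilde A_t) \geq \Phi(\Phi^{-1}(\gamma_d(\tilde A)) + t)$, applied with $t = \eps\sqrt{2\pi}$ yields the first inequality $\mathbb{P}(A_\eps) \ge \Phi(\eps\sqrt{2\pi} + \Phi^{-1}(\mathbb{P}(A)))$. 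The second inequality is then immediate: since $\mathbb{P}(A)\ge 1/2$ forces $\Phi^{-1}(\mathbb{P}(A)) \ge 0$, monotonicity of $\Phi$ gives $\mathbb{P}(A_\eps) \ge \Phi(\eps\sqrt{2\pi})$, and the Gaussian tail bound $1 - \Phi(t) \le e^{-t^2/2}$ with $t = \eps\sqrt{2\pi}$ gives $\Phi(\eps\sqrt{2\pi}) \ge 1 - e^{-\pi\eps^2}$.

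For the ball, set $C = B^d \setminus B_\eps$, so that $\mathbb{P}(C) = 1 - \mathbb{P}(B_\eps)$ and, by definition of $B_\eps$, every $x \in B$ and $y \in C$ satisfy $\|x - y\| \ge \eps$. The parallelogram identity then controls midpoints: $\|\tfrac{x+y}{2}\|^2 = \tfrac12(\|x\|^2 + \|y\|^2) - \tfrac14\|x-y\|^2 \le 1 - \eps^2/4$, so the Minkowski average $\tfrac12(B+C)$ is contained in the ball of radius $r := \sqrt{1 - \eps^2/4} = 1 - \delta_{\ell_2}(\eps)$. I would then combine the Brunn--Minkowski inequality $\Vol(\tfrac12(B+C))^{1/d} \ge \tfrac12\Vol(B)^{1/d} + \tfrac12\Vol(C)^{1/d}$ with AM--GM on the right-hand side to obtain $\Vol(\tfrac12(B+C))^2 \ge \Vol(B)\Vol(C)$. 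Since $\tfrac12(B+C) \subseteq r\,B^d$ has volume at most $r^d \Vol(B^d)$, dividing through by $\Vol(B^d)^2$ gives $\mathbb{P}(B)\,\mathbb{P}(C) \le r^{2d} = (1 - \delta_{\ell_2}(\eps))^{2d}$, which rearranges to the first inequality $\mathbb{P}(B_\eps) \ge 1 - \tfrac{1}{\mathbb{P}(B)}(1 - \delta_{\ell_2}(\eps))^{2d}$. The final exponential bound reduces to the two elementary estimates $1 - \delta_{\ell_2}(\eps) \le e^{-\delta_{\ell_2}(\eps)}$ and $\delta_{\ell_2}(\eps) \ge \tfrac{2-\sqrt3}{3}\eps^2$, which together give $(1-\delta_{\ell_2}(\eps))^{2d} \le e^{-2d\,\delta_{\ell_2}(\eps)} \le e^{-2d\frac{2-\sqrt3}{3}\eps^2}$.

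I expect the main obstacles to be the two ``right idea'' steps rather than the computations: on the cube side, recognizing that the correct object is the $(1/\sqrt{2\pi})$-Lipschitz pushforward $\Psi$ and that the neighborhood radius must be rescaled by exactly $\sqrt{2\pi}$, so that sharp Gaussian isoperimetry transfers without loss; on the ball side, the midpoint/Brunn--Minkowski trick together with the AM--GM step, which is what produces the symmetric product $\mathbb{P}(B)\,\mathbb{P}(C)$ and hence the clean factor $1/\mathbb{P}(B)$. The concluding tail and convexity estimates are routine and I would leave them as one-line verifications.
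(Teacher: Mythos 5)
Your proof is correct, and on the cube it coincides with the paper's argument step for step: the coordinatewise Gaussian-CDF pushforward with Lipschitz constant $1/\sqrt{2\pi}$, the Gaussian isoperimetric inequality applied to the preimage with the neighborhood radius rescaled to $\eps\sqrt{2\pi}$, the observation that $\mathbb{P}(A)\ge 1/2$ makes $\Phi^{-1}(\mathbb{P}(A))\ge 0$, and the tail bound $1-\Phi(t)\le e^{-t^2/2}$ at $t=\eps\sqrt{2\pi}$. The only genuine divergence is on the ball. There the paper does not actually prove the key inequality $\mathbb{P}(B_\eps)\ge 1-\mathbb{P}(B)^{-1}\left(1-\delta_{\ell_2}(\eps)\right)^{2d}$; it imports it wholesale as Equation (2.25) of Ledoux's monograph and only supplies the two elementary follow-ups $1-\delta\le e^{-\delta}$ and $\delta_{\ell_2}(\eps)\ge \frac{2-\sqrt{3}}{3}\eps^2$. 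You instead reprove that inequality from scratch: points of $B$ and of $C=B^d\setminus B_\eps$ are at Euclidean distance at least $\eps$, the parallelogram identity places all midpoints inside the ball of radius $\sqrt{1-\eps^2/4}=1-\delta_{\ell_2}(\eps)$, and Brunn--Minkowski combined with AM--GM yields $\mathbb{P}(B)\,\mathbb{P}(C)\le\left(1-\delta_{\ell_2}(\eps)\right)^{2d}$. This is exactly the classical Gromov--Milman argument behind Ledoux's equation, so the two routes prove the same statement with the same constants; yours buys a self-contained exposition at the cost of a few extra lines, while the paper's buys brevity by citation. Your concluding estimates are fine (note $\delta_{\ell_2}(\eps)\ge\eps^2/8\ge\frac{2-\sqrt{3}}{3}\eps^2$ on the relevant range $0<\eps\le 2$), and your version even avoids a misplaced parenthesis in the paper's displayed chain for the cube.
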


Based on \cref{thm:CoM} we can now show that under some circumstances, \emph{no} algorithm that achieves can perfect clean accuracy can also achieve a good robust accuracy.
\begin{example}[Vulnerability Guarantee]
	\label{exam:vulnerability}
	Consider the joint distribution $\mathbb{P}(x, y)$,
	where the input data $x$ is uniformly distributed on $[0, 1]^d$ and label $y$ has 10 classes.
	Further assume the marginal distribution of $y$ is also uniform\footnote{but their joint distribution is not necessary uniform.}.
	\cref{thm:CoM} implies that under $\ell_2$ adversarial attack with $\epsilon = 0.5$, at least 94 \% of the samples are ether wrongly classified or can be successfully attacked for a classifier with perfect clean accuracy.

	Furthermore, if $d = 3 \times 32 \times 32$,
	A parallel calculation for $\mathbb{P}(x, y)$ on the $B^d$ domain gives:
	under $\ell_2$ adversarial attack with $\eps = 0.09$ ,
	at least 97 \% of the the samples are ether wrongly classified or can be successfully attacked for a classifier with perfect clean accuracy.
\end{example}
On the one hand, \cref{thm:CoM} and \cref{exam:vulnerability} suggest that the uniform distribution on $[0, 1]^d$ enjoys more robustness than the uniform distribution on $B^d$, and it is not affected by the high dimensionality.
This may partially explain why MNIST is more adversarially robust than CIFAR10, as the distribution of $x$ in CIFAR10 is ``closer'' to $B^d$ than to $[0, 1]^d$.
On the other hand, while not completely sharp, they also suggest the intrinsic difficulty in achieving good robust accuracy.

Note that one limit of \cref{thm:CoM} and \cref{exam:vulnerability} is the uniform distribution assumption, which is surely not true for natural images.
Indeed, although rigorously developed, \cref{thm:CoM} and \cref{exam:vulnerability} do \emph{not} explain certain empirical observations.
Following \citet{wong2018provable}, we train a provably\footnote{``Provably'' means that the robust accuracy of the model can be rigorously proved.} robust model on
a binarized MNIST dataset (bMNIST) \footnote{It is created by rounding all pixel values to $0$ or $1$ from the original MNIST}.
Our experiments shows that the learned model achieves 3.00\% provably robust error on bMNIST test data, while maintaining 97.65\% clean accuracy.
Details of this experiment in described in Appendix \ref{sec:details_lp_bmnist}.

The above MNIST experiment and \cref{exam:vulnerability} suggest the essential role of the data distribution in achieving good robust and clean accuracies.
While it is hard to completely answer the question what geometric properties differentiate the concentration rates between the ball/cube in high dimension and the distribution of bMNIST,
we remark that one obvious difference is the distance distributions in both spaces.
Could the distance distributions explain the differences in clean and robust accuracies?
Note that the same method can only achieve 37.70\% robust error on original MNIST data, and even higher error on CIFAR10, which further supports this hypothesis.
In the rest of this paper, we further investigate the dependence of robust accuracy on the distribution of real data.

\vspace{\presecvspace}
\section{Robustness on Datasets Variants with Different Input Distributions}
\label{sec:exp_robustness}
\vspace{\postsecvspace}
\cref{subsec:difficultyinrobustness} clearly suggests that the data distribution plays an essential role in the achievable robust accuracy.
In this section we carefully design a series of datasets and experiments to further study its influence.
One important property of our new datasets is that they have different $\Prob(x)$'s while keep $\Prob(y|x)$ reasonably fixed, thus these datasets are only different in a ``semantic-lossless'' shift.
Our experiments reveal an unexpected phenomenon that while standard learning methods manage to achieve stable clean accuracies across different data distributions under ``semantic-lossless'' shifts, however, adversarial training, arguably the most popular method to achieve robust models, loses this desirable property, in that its robust accuracy becomes unstable even under a ``semantic-lossless'' shift on the data distribution.

We emphasize that different from preprocessing steps or transfer learning, here
we treat the shifted data distribution as a new underlying distribution.
We both train the models and test the robust accuracies on the same new distribution.

\vspace{\presubsecvspace}
\subsection{Smoothing and Saturation}
\vspace{\postsubsecvspace}
We now explain how the new datasets are generated under ``semantic-lossless'' shifts.
In general, MNIST has a more binary distribution of pixels, while CIFAR10 has
a more continuous spectrum of pixel values, as shown in \cref{subfig:MNISThisto} and \ref{subfig:CIAFRhisto}.
To bridge the gap between these two datasets that have completely different robust accuracies, we propose two operations to modify their distribution on $x$: smoothing and saturation, as described below.
We apply different levels of ``smoothing'' on MNIST to create more CIFAR-like datasets, and different levels of ``saturation'' on CIFAR10 to create more ``binary'' ones.
Note that we would like to maintain the semantic information of the original data, which means that such operations should be semantics-lossless and not arbitrarily wide.

\textbf{Smoothing}
is applied on MNIST images, to make images ``less binary''. Given an image $x_i$, its smoothed version $\tilde {x_i}^{(s)}$
is generated by first applying average filter of kernel size $s$ to $x_i$ to generate an intermediate smooth image,
and then take pixel-wise maximum between $x_i$ and the intermediate smooth image. Our MNIST variants include the binarized MNIST and smoothed MNIST with different kernel sizes. As shown in \cref{subfig:MNISTsmoothed}, all MNIST variants still maintain the semantic information in MNIST, which indicates that $\Prob(y\,\vert\, \tilde x^{(s)})$ should be similar to $\Prob(y\,\vert\, x)$. It is thus reasonable to assume that $y_i$ is approximately sampled from $\Prob(y\,\vert\, \tilde x^{(s)})$, and as such we assign $y_i$ as the label of $\tilde x^{(s)}$.
Note that all the data points in the binarized MNIST are on the corners of the unit cube.
For the smoothed versions, pixels on the digit boundaries are pushed off the corner of the unit cube.

\textbf{Saturation} of the image $x$ is denoted by $\what{x}^{(p)}$, and the procedure is defined as below:
\[
\what{x}^{(p)} = \mathrm{sign}(2x - 1)\frac{|2x-1|^{\frac{2}{p}}}{2} + \frac{1}{2},
\]
where all the operations are pixel-wise and each element of $\wxp$ is guaranteed to be in $[0,1]$.
Saturation is used to generate variants of the CIFAR10 dataset with less centered pixel values.
For different saturation level $p$'s, one can see from \cref{subfig:CIAFRstaturated} that $\wxp$ is still semantically similar to $x$ in the same classification task.
Similarly we assign $y_i$ as the label of $\what{x}_i^{(p)}$.
One immediate property about $\wxp$ is that it pushes $x$ to the corners of the data domain where the pixel values are either $0$ or $1$ when $p\ge 2$, and pull the data to the center of $0.5$ when $p\le 2$. When $p=2$ it does not change the image, and when $p=\infty$ it becomes binarization.

\vspace{\presubsecvspace}
\subsection{Experimental Setups}
\vspace{\postsubsecvspace}
In this section we use the smoothing and saturation operations
to manipulate the data distributions of MNIST and CIFAR10, and
show empirical results on how data distributions
affects robust accuracies of neural networks trained on them.
Since we are only concerned with the intrinsic robustness of
neural networks models, we do not consider methods like preprocessing
that tries to remove perturbations or randomizing inputs.
We perform standard neural network training on clean data to measure the difficulty of the classification task,
and projected gradient descent (PGD) based adversarial training \citep{madry2017towards} to measure the difficulty to achieve robustness.

By default, we use LeNet5 on all the MNIST variants,
and use wide residual networks \citep{zagoruyko2016wide}
with widen factor 4 for all the CIFAR10 variants.
Unless otherwise specified, PGD training on MNIST variants and CIFAR10 variants all follows the settings in \citet{madry2017towards}.
Details of network structures and training hyperparameters can be found in \cref{sec:details_advtrain_settings}.

We evaluate the classification performance using the test \emph{accuracy} of standardly trained models on clean unperturbed examples,
and the robustness using the \emph{robust accuracy} of PGD trained model, which is the accuracy on adversarially perturbed examples.
Although not directly indicating robustness, we report the clean accuracy on PGD trained models to indicate the tradeoff between being accurate and robust.
To understand whether low robust accuracy is due to low clean accuracy or vulnerability of model, we also report \emph{robustness w.r.t. predictions}, where the attack is used to perturb against the model's clean prediction, instead of the true label.
We use $\ell_\infty$ untargeted PGD attacks \citep{madry2017towards} as our adversary, since it is the strongest attack in general based on our experiments.
Unless otherwise specified, PGD attacks on MNIST variants run with $\epsilon=0.3$, step size of $0.01$ and 40 iterations, and runs with $\epsilon=8/255$, step size of $2/255$ and 10 iterations on CIFAR10 variants , same as in  \citet{madry2017towards}. We use the PGD attack implementation from the AdverTorch toolbox \citep{ding2018advertorch}.

\begin{figure}[t]
\centering
\begin{subfigure}[b]{0.49\textwidth}
    \centering
    \includegraphics[width=\linewidth]{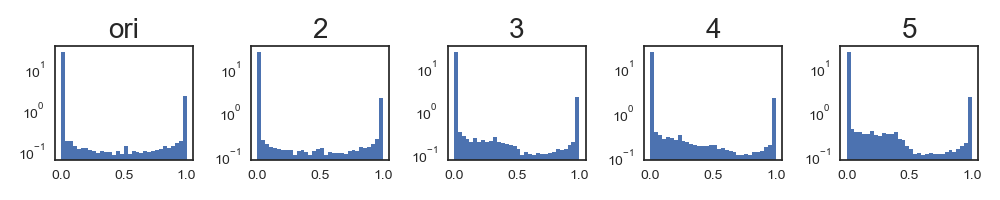}
    \caption{Pixel value histogram (log scale in y) of MNIST variants, from left to right: original, smoothed with kernel size 2, 3, 4, 5}
    \label{subfig:MNISThisto}
\end{subfigure}
~
\begin{subfigure}[b]{0.49\textwidth}
    \centering
    \includegraphics[width=\linewidth]{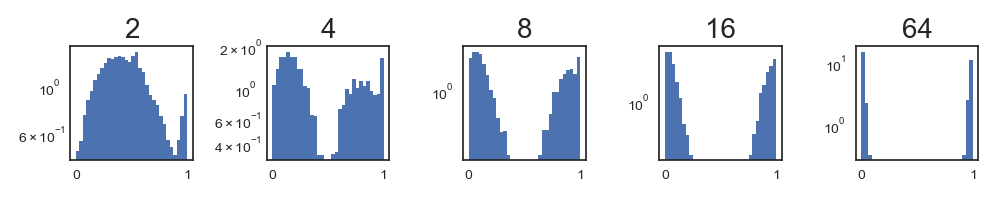}
    \caption{Pixel value histogram (log scale in y) of CIFAR10 variants, from left to right: original, saturation level 4, 8, 16, 64}
    \label{subfig:CIAFRhisto}
\end{subfigure}

\begin{subfigure}[b]{0.49\textwidth}
\centering
\includegraphics[]{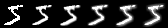}
\caption{MNIST variants, from left to right: binarized, \\original, smoothed with kernel size 2, 3, 4, 5}
\label{subfig:MNISTsmoothed}
\end{subfigure}
\begin{subfigure}[b]{0.49\textwidth}
\centering
\includegraphics[width=0.9\linewidth]{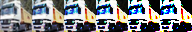}
\caption{CIFAR10 variants, from left to right, original, \\saturation level 4, 8, 16, 64, $\infty$}
    \label{subfig:CIAFRstaturated}
\end{subfigure}

\caption{Variants of smoothed MNIST and saturated CIFAR10 datasets.}
\label{fig:data_imgs}
\vspace{\figvspace}
\end{figure}

\vspace{\presubsecvspace}
\subsection{Sensitivity of Robust Accuracy to Data Transformations}
\vspace{\postsubsecvspace}
\begin{figure}[t]
\begin{subfigure}[tbh]{0.48\textwidth}
\centering
\includegraphics[width=\linewidth]{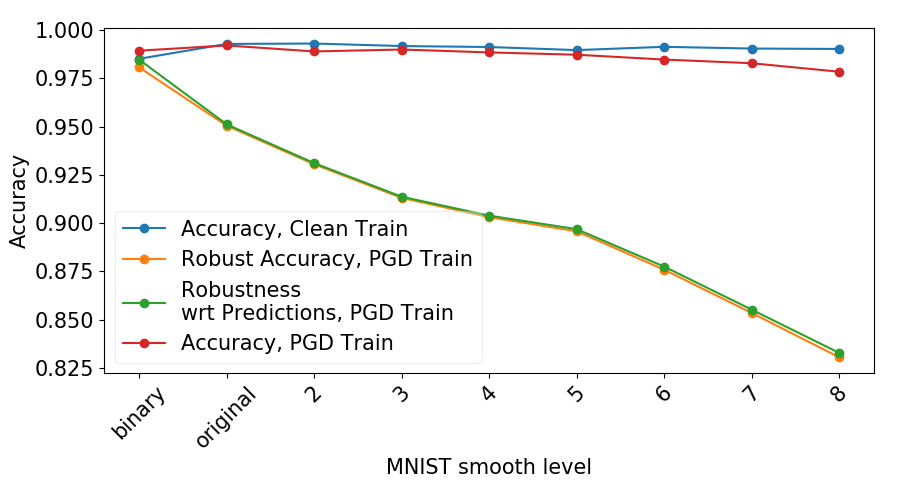}
\caption{MNIST results under different smooth levels}
\label{fig:mnist_robustness}
\end{subfigure}
\begin{subfigure}[tbh]{0.48\textwidth}
\centering
\includegraphics[width=\linewidth]{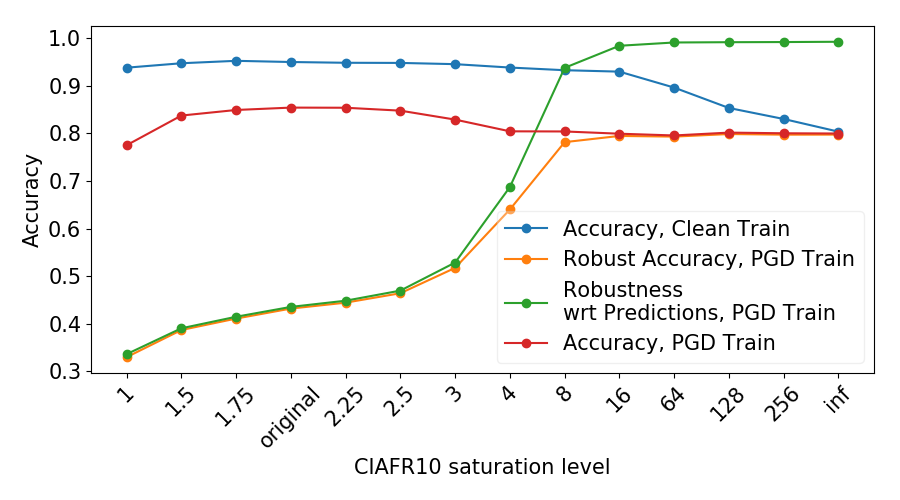}
\caption{CIFAR10 results under different saturation levels}
\label{fig:cifar10_robustness}
\end{subfigure}

\caption{Accuracy, Robust Accuracy and Robustness w.r.t. Predictions on different data variants}
\vspace{\figvspace}
\end{figure}

Results on MNIST variants are presented in \cref{fig:mnist_robustness} \footnote{\label{ft1} Exact numbers are listed in Table \ref{tab:mnist_results} and \ref{tab:cifar10_results} in Appendix \ref{sec:detailed_results}.}.
The clean accuracy of standard training is very stable across different MNIST variants.
This indicates that their classification tasks have similar difficulties, if the training has no robust considerations.
When performing PGD adversarial training,
clean accuracy drops only slightly.
However,  both robust accuracy and robustness w.r.t. predictions drop significantly.
This indicates that as smooth level goes up, it is significantly harder to achieve robustness.
Note that for binarized MNIST with adversarial training,
the clean accuracy and the robust accuracy are almost the same.
Indicating that getting high robust accuracy on binarized MNIST does not conflict with achieving high clean accuracy. This result conforms with results of provably robust model having high robustness on binarized MNIST described in Section \ref{sec:theory_and_provable_cases}.

CIFAR10 result tell a similar story, as reported in Figure \ref{fig:cifar10_robustness} \footref{ft1}.
For standard training, the clean accuracy maintains almost at the
original level until saturation level 16,
despite that it is already perceptually very saturated.
In contrast, PGD training has a different trend.
Before level 16, the robust accuracy significantly increases from 43.2\% until 79.7\%,
while the clean test accuracy drops only in a comparatively small range, from 85.4\% to 80.0\%.
After level 16, PGD training has almost the same clean accuracy and robust accuracy.
However, robustness w.r.t. predictions still keeps increasing, which again indicates the instability of the robustness.
On the other hand, if the saturation level is smaller than 2, we get worse robust accuracy after PGD training, e.g. at saturation level 1 the robust accuracy is 33.0\%.
Simultaneously, the clean accuracy maintains almost the same.

Note that after saturation level 64 the standard training accuracies starts to drop significantly. This is likely due to that high degree of saturation has caused ``information loss'' of the images.
Models trained on highly saturated CIFAR10 are quite robust and the gap between robust accuracy  and robustness w.r.t. predictions is due to lower clean accuracy.
In contrast, In MNIST variants, the robustness w.r.t. predictions is always almost the same as robust accuracy, indicating that drops in robust accuracy is due to adversarial vulnerability.

From these results, we can conclude that robust accuracy under PGD training
is much more sensitive than clean accuracy under standard training to the differences in input data distribution.
More importantly, a semantically-lossless shift on the  data transformation,
while not introducing any unexpected risk for the clean accuracy of standard training, can lead to large variations in robust accuracy.
Such previously unnoticed sensitivity raised serious concerns in practice, as discussed in the next section.

\vspace{\presecvspace}
\section{Practical Implications}
\label{sec:implications}
\vspace{\postsecvspace}

Given adversarial robustness' sensitivity to input distribution, we further demonstrate two practical implications: 1) Robust accuracy could be sensitive to image acquisition condition and preprocessing. This leads to unreliable benchmarks in practice; 2) When introducing new dataset for benchmarking adversarial robustness, we need to carefully choose datasets with the right characteristics.

\vspace{\presubsecvspace}
\subsection{Robust Accuracy is Sensitive to Gamma Correction}
\vspace{\postsubsecvspace}

The natural images are acquired under different lighting conditions, with different cameras and different camera settings.
They are usually preprocessed in different ways.
All these factors could lead to mild shifts on the input distribution.
Therefore, we might get very different performance measures when performing adversarial training on images taken under different conditions.
In this section, we demonstrate this phenomenon on variants of CIFAR10 images under different gamma mappings.
These variants are then used to represent image dataset acquired under different conditions. Gamma mapping is a simple element-wise operation that takes the original image $x$, and output the gamma mapped image $\tilde{x}^{(\gamma)}$ by performing $\tilde{x}^{(\gamma)} = x^\gamma$.
Gamma mapping is commonly used to adjust the exposure of an images.
We refer the readers to \citet{szeliski2010computer} on more details about gamma mappings.
Figure \ref{fig:gamma_images_and_results} shows variants of the same image processed with different gamma values.
Lower gamma value leads to brighter images and higher gamma values gives darker images, since pixel values range from 0 to 1.
Despite the changes in brightness, the semantic information is preserved.

We perform the same experiments as in the saturated CIFAR10 variants experiment in \cref{sec:exp_robustness}.
The results are displayed in Figure \ref{fig:gamma_images_and_results}.
Accuracies on clean data almost remain the same across different gamma values. However, under PGD training, both accuracy and robust accuracy varies largely following different gamma values.

These results should raise practitioners' attention on how to interpret robustness benchmark ``values''.
For the same adversarial training setting,
the robustness measure might change drastically between image datasets with different ``exposures''.
In other words, if a training algorithm achieves good robustness on one image dataset,
it doesn't necessarily achieve similar robustness on another semantically-identical but slightly varied datasets.
Therefore, the actual robustness could either be significantly underestimated or overestimated.

This raises the questions on whether we are evaluating image classifier robustness in a reliable way, and how we choose benchmark settings that can match the real robustness requirements in practice.
This is an important open question and we defer it to future research.

\begin{figure}

\begin{subfigure}[b]{0.49\textwidth}
\centering
\includegraphics[width=0.7\linewidth]{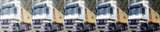}
\includegraphics[width=\linewidth]{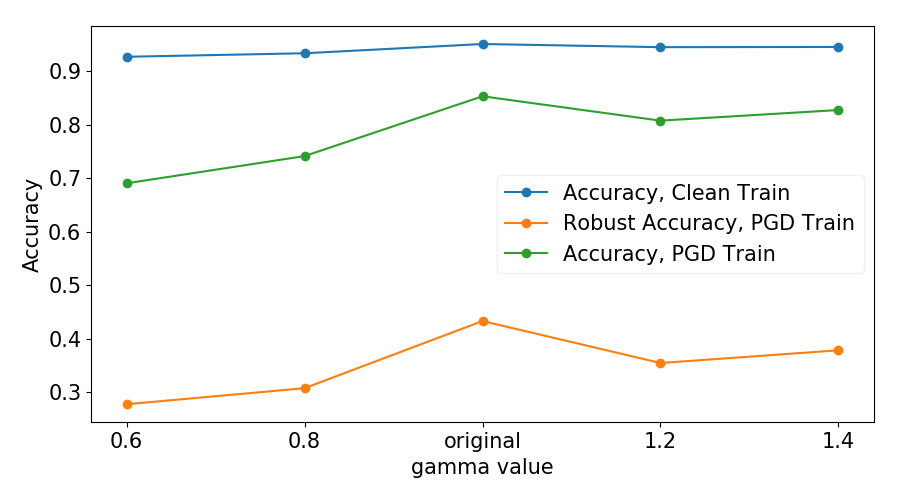}
\caption{Top: Gamma mapped images from left to right 0.6, 0.8, 1.0 (original image), 1.2 , 1.4; Bottom: Robustness results on gamma mapped CIFAR10 variant}
\label{fig:gamma_images_and_results}
\end{subfigure}
~~~~
\begin{subfigure}[b]{0.49\textwidth}
\centering
\includegraphics[width=0.7\linewidth]{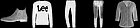}
\includegraphics[width=0.7\linewidth]{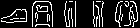}

\begin{flushright}
\small
fMNIST: accuracy, standard training, 92.7\%~~~~~~~~ \\
accuracy, PGD training, 81.2\%~~~~~~~~ \\
robust accuracy, PGD training, 65.3\%~~~~~~~~ \\
efMNIST: accuracy, standard training, 88.3\%~~~~~~~~ \\
accuracy, PGD training, 87.2\%~~~~~~~~ \\
robust accuracy, PGD training, 86.6\%~~~~~~~~ \\
\vspace{0.5cm}
\end{flushright}

\caption{Top: Examples of fashion-MNIST images and edge-fashion-MNIST; bottom: Robustness results on fMNIST and efMNIST}
\label{fig:fmnist_images_and_results}
\end{subfigure}

\caption{Illustrations on Practical Implications}
\vspace{\figvspace}
\end{figure}

\vspace{\presubsecvspace}
\subsection{Choice of Datasets for Evaluating Robustness}
\label{sec:fmnist_example}
\vspace{\postsubsecvspace}

As discussed, evaluating robustness on a suitable dataset is important.
Here we use fashion-MNIST (fMNIST) \citep{xiao2017fashion} and edge-fashion-MNIST (efMNIST) as examples to analyze characteristics of ``harder'' datasets.
The edge-fashion MNIST is generated by running Canny edge detector \citep{canny1986computational} with $\sigma=1$ on the fashion MNIST images. Figure \ref{fig:fmnist_images_and_results} shows examples of fMNIST and efMNIST.
We performed the same standard training and PGD training experiments on both fMNIST and efMNIST as we did on MNIST.
Figure \ref{fig:fmnist_images_and_results} shows the results.
We can see that fMNIST exhibit similar behavior to CIFAR10, where the test accuracy is significantly affected by PGD training and the gap between robust accuracy and accuracy is large.
On the other hand, efMNIST is closer to the binarized MNIST:
the accuracy is affected very little by PGD training, along with an insignificant difference between robust accuracy and accuracy.

Both fMNIST and efMNIST can be seen as a ``harder'' MNIST, but
they are harder in different ways.
One one hand, since efMNIST results from the edge detection run on fMNIST,
it contains less information.
It is therefore harder to achieve higher accuracy on efMNIST than on fMNIST, where richer semantics is accessible.
However, fMNIST's richer semantics makes it better resembles natural images' pixel value distribution,
which could lead to increased difficulty in achieving adversarial robustness.
efMNIST, on the other hand, can be viewed as a set of ``more complex binary symbols'' compared to MNIST or binarized MNIST.
It is harder to classify these more complex symbols.
However, it is easy to achieve high robustness due to the binary pixel value distribution.

To sum up, when introducing new dataset for adversarial robustness,
we should not only look for a ``harder'' one, but we also need to consider
whether the dataset is ``harder in the right way''.

\vspace{\presecvspace}
\section{Attempts to Understand the Phenomenon}
\label{sec:attempts}
\vspace{\postsecvspace}

In this section, we make initial attempts to understand
the sensitivity of adversarial robustness.
We use CIFAR10 variants as the running example,
but these analyses apply to MNIST variants as well.
Saturation pushes pixel values towards 0 or 1, i.e. towards the corner of unit cube, which naturally suggests two potential factors for the change in robustness.
1) the ``perturbable volume'' decreases;
2) distances between data examples increases.
Intuitively, both could be related to the increasd robustness.
We analyze them and show that although they are correlated with robustness change, none of them can fully explain the observed phenomena.
We then further examine the possibility of increasing robust accuracy on less robust datasets by having larger models and more data.

\vspace{\presubsecvspace}
\subsection{On the Influence of Perturbable Volume}
\vspace{\postsubsecvspace}

Saturation moves the pixel values towards 0 and 1,
therefore pushing the data points to the corners of the unit cube input domain.
This makes the valid perturbation space to be smaller,
since the space of perturbation is the intersection between the $\epsilon$-$\ell_\infty$ ball and the input domain.
Due to high dimensionality, the volume of ``perturbable region'' changes drastically across different saturation levels.
For example, the average log perturbable volume
\footnote{Definition of ``log perturbable volume'' and other detailed analysis of perturbable volume are given in Appendix \ref{sec:detailed_domain_boundary} and Table \ref{tab:pgd_bound}.}
of original CIFAR10 images are
-12354, and the average log perturbable volume of $\infty$-saturated CIFAR10 is -15342, which means that the perturbable volume differs by a factor of $2^{2990}=2^{\left( -12352-(-15342) \right)}$.
If the differences in perturbable volume is a key factor on the robustness' sensitivity,
then by allowing the attack to go beyond the domain boundary \footnote{So we have a controlled and constant perturbable volume across all cases, where the volume is that of the $\epsilon$-$\ell_\infty$ ball},
the robust accuracies across different saturation levels should behave similarly again,
or at least significantly differ from the case of box constrained attacks.
We performed PGD attack allowing the perturbation to be outside of the data domain boundary, and compare the robust accuracy to what we get for normal PGD attack within domain boundary. We found that the expected difference is not observed, which serves as evidence that differences in perturbable volume are not causing the differences in robustness on the tested MNIST and CIFAR10 variants.

\vspace{\presubsecvspace}
\subsection{On the Influence of Inter-Class Distance}
\label{sec:inter_class_distance}
\vspace{\postsubsecvspace}

When saturation pushes data points towards data domain boundaries,
the distances between data points increase too.
Therefore, the margin, the distance from data point to the decision boundary, could also increase.
We use the ``inter-class distance'' as an approximation.
Inter-class distance
\footnote{The calculation of ``inter-class distance'' and other detailed analyses are delayed to Appendix \ref{sec:detailed_inter_class_distance} and Fig \ref{fig:dists}. Also note that our inter-class distance is similar to the ``distinguishability'' in \citet{fawzi2015analysis}, which also measures the distance between classes to quantify easiness of achieving robustness on a certain dataset.}
characterizes the distances between each class to rest of classes in each dataset. Intuitively, if the distances between classes are larger,
then it should be easier to achieve robustness.
We also observed (in Appendix \ref{sec:detailed_inter_class_distance} Figure \ref{fig:dists}) that inter-class distances are positively correlated with robust accuracy.
However, we also find counter examples where datasets having the same inter-class distance exhibit different robust accuracies.
Specifically, We construct scaled variants of original MNIST and binarized MNIST, such that their inter-class distances are the same as smooth-3, smooth-4, smooth-5 MNIST. The scaling operation is defined as
$\tilde{x}^{(\alpha)} = \alpha(x - 0.5) + 0.5$, where $\alpha$ is the scaling coefficient. When $\alpha < 1.$ each dimension of $x$ is pushed towards the center with the same rate.
Table \ref{tab:scaled_mnist} shows the results. We can see that although having the same interclass distances, the smoothed MNIST is still less robust
than the their correspondents of scaled binarized MNIST and original MNIST.
This indicates the complexity of the problem, such that a simple measure like inter-class distance cannot fully characterize robustness property of datasets, at least on the variants of MNIST.

\begin{table}[t]
\caption{Different robust accuracies on datasets with same inter-class distances}
\label{tab:scaled_mnist}

\begin{center}
\begin{scriptsize}
\begin{sc}
\begin{tabular}{ccccccc}
\midrule
\shortstack{Inter-class\\Distances \\~\\~} & \shortstack{Smooth \\ level of \\ Smoothed \\ MNIST} &  \shortstack{Resilience \\ of \\Smoothed \\ MNIST} & \shortstack{Scale factor \\ of Scaled \\ Original \\ MNIST} &  \shortstack{Resilience \\ of Scaled  \\ Original \\ MNIST} & \shortstack{Scale factor \\ of  Scaled \\ Binarized \\ MNIST} &  \shortstack{Resilience \\ of  Scaled \\Binarized \\ MNIST}\\
\midrule

7.12 & 3 & 91.3 \% & 0.970 & 94.6 \% & 0.821 & 98.6 \% \\
7.01 & 4 & 90.3 \% & 0.955 & 95.5 \% & 0.809 & 98.6 \% \\
6.85 & 5 & 89.6 \% & 0.932 & 94.9 \% & 0.790 & 98.5 \% \\

\bottomrule
\end{tabular}
\end{sc}
\end{scriptsize}
\end{center}
\vspace{\figvspace}
\end{table}

\vspace{\presubsecvspace}
\subsection{On the Required Model Capacity and Sample Complexity}
\label{sec:capacity_and_sample_complexity}
\vspace{\postsubsecvspace}

\begin{figure}[t]
\begin{subfigure}[b]{0.49\textwidth}
\centering
\includegraphics[width=\linewidth]{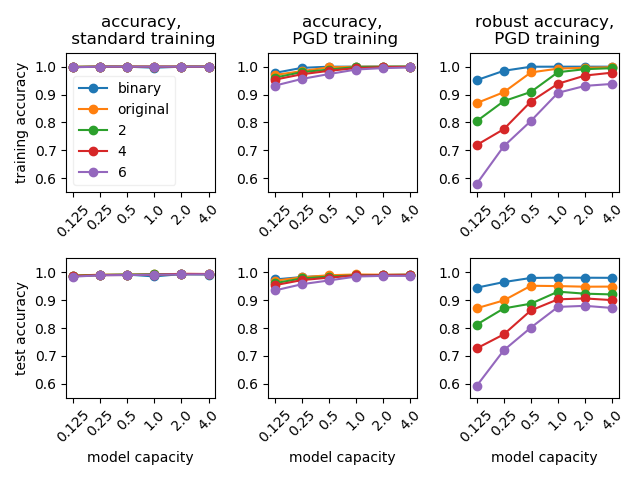}
\caption{MNIST results on model capacity}
\label{fig:mnist_model_capacity}
\end{subfigure}
\begin{subfigure}[b]{0.49\textwidth}
\centering
\includegraphics[width=\linewidth]{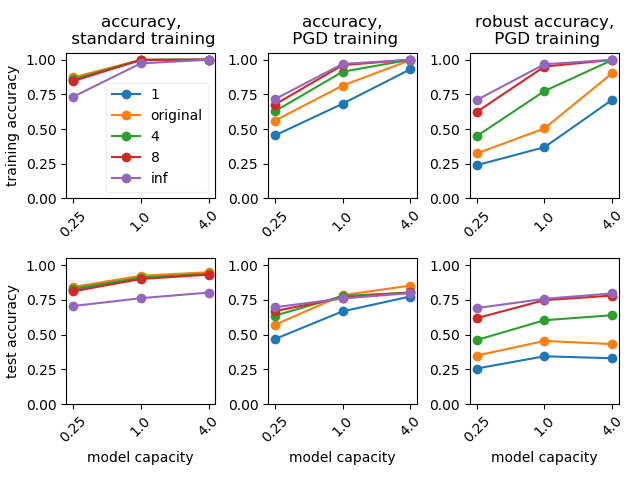}
\caption{CIFAR10 results on model capacity}
\label{fig:cifar10_model_capacity}
\end{subfigure}

\begin{subfigure}[b]{0.49\textwidth}
\centering
\includegraphics[width=\linewidth]{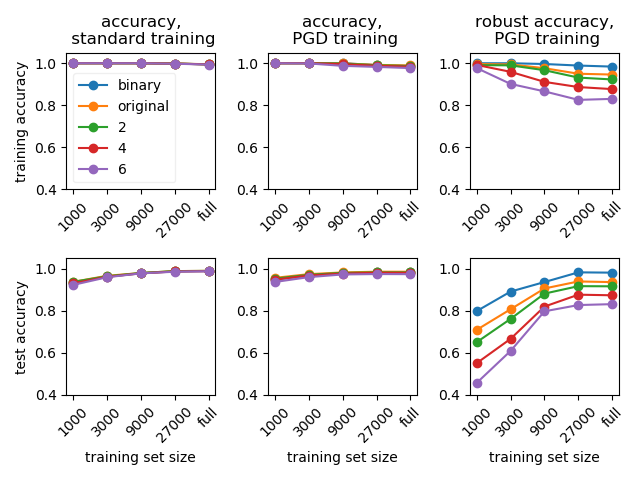}
\caption{MNIST results on training set size}
\label{fig:mnist_sample_complexity}
\end{subfigure}
\begin{subfigure}[b]{0.49\textwidth}
\centering
\includegraphics[width=\linewidth]{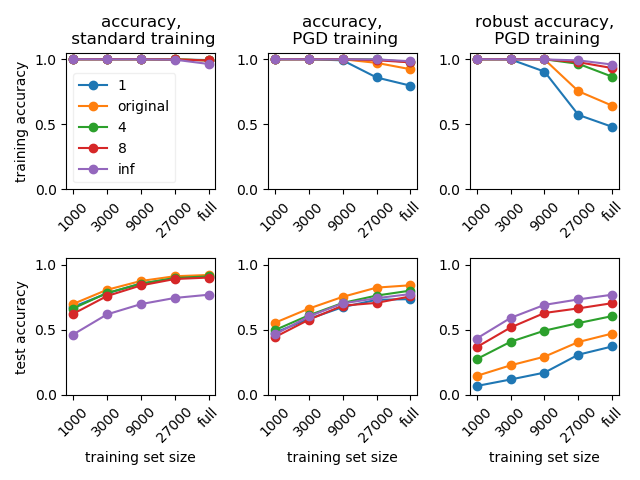}
\caption{CIFAR10 results on training set size}
\label{fig:cifar10_sample_complexity}
\end{subfigure}

\caption{Model capacity and training set size's influences on accuracy and robust accuracy. In each subfigure, the top row contains accuracy and robust accuracy measured on training set, the bottom row contains results measured on test set.}
\label{fig:model_capacity_and_sample_complexity}
\vspace{\figvspace}
\end{figure}

In practice, it is unclear how far robust accuracy of PGD trained model is from adversarial Bayes error $RR^*$ for the given data distribution.
In the case $RR^*$ is not yet achieved,
there is a non-exhaustive list that we can improve upon:
1) use better training/learning algorithms;
2) increase the model capacity;
3) train on more data.
Finding a better learning algorithm is beyond the scope of this paper.
Here we inspect 2) and 3) to see if it is possible to improve robustness by having larger model and more data.
For model capacity, we use differently sized LeNet5 by multiplying the number of channels at each layer with different widen factors.
These factors include 0.125, 0.25, 0.5, 1, 2, 4. On CIFAR10 variants, we use WideResNet with widen factors 0.25, 1 and 4.
For sample complexity, we follow the practice in Section \ref{sec:exp_robustness} except that we use a weight decay value of 0.002 to prevent overfitting. For both MNIST and CIFAR10, we test on 1000, 3000, 9000, 27000 and entire training set.
Both model capacity and sample complexity results are shown in Figure \ref{fig:model_capacity_and_sample_complexity}.

For MNIST, both training and test accuracies of clean training are invariant to model sizes, even we only use a model with widen factor 0.125.
In slight contrast, both the training and test accuracy of PGD training increase as the model capacity increases, but it plateaus after widen factor 1 at an almost 100\% accuracy.
For robust accuracy, training robust accuracy kept increasing as model gets larger until the value is close to 100\%.
However, test robust accuracy stops increasing after widen factor 1, additional model capacity leads to larger (robust) generalization gap.
 When we vary the size of training set, the model can always fit the training set well to almost 100\% clean training accuracy under standard training. The clean test accuracy grows as the training set size get larger.
Training set size has more significant impact on robust accuracies of PGD trained models.
For most MNIST variants except for binarized MNIST, training robust accuracy gradually drops, and test robust accuracy gradually increases as the training set size increases.
This shows that when training set size is small,
PGD training overfits to the training set.
As training set gets larger, the generalization gap becomes smaller.
Both training and test robust accuracies plateau after training set size reaches 27000. Indicating that increasing the training set size might not help in this setting.
In conclusion, for MNIST variants, increasing training set size and model capacity does not seem to help beyond a certain point.
Therefore, it is not obvious on how to improve robustness on MNIST variants with higher smoothing levels.

CIFAR10 variants exhibit similar trends in general.
One notable difference is that for PGD training, the training robust accuracy does not plateau as model size increases.
However the test robust accuracy plateaus after widen factor 1.
Also when training set size increases,
the training robust accuracy drops and test robust accuracy increases with no plateau present.
These together suggest that having more training data and training a larger model could potentially improve the robust accuracies on CIFAR10 variants.
 One interesting phenomenon is that binarized MNIST and $\infty$-saturated CIFAR10 has different sample complexity property,
despite both being ``cornered'' datasets.
This indicates that the although binarization can largely influence robustness, it does not decide every aspect of it, such as sample complexity. This complex interaction between the classification task and input data distribution is still to be understood further.

\vspace{\presecvspace}
\section{Conclusion}
\vspace{\postsecvspace}

In this paper we provided theoretical analyses to show the significance of input data distribution in adversarial robustness, which further
motivated our systematic experiments on MNIST and CIFAR10 variants. We discovered that, counter-intuitively,
robustness of adversarial trained models are sensitive to semantically-preserving transformations on data.
We demonstrated the practical implications of our finding that the existence of such sensitivity questions the reliability in evaluating robust learning algorithms on particular datasets.
Finally, we made initial attempts to understand this sensitivity.

\paragraph{Acknowledgement} We thank Marcus Brubaker for many helpful discussions. We also thank Junfeng Wen and Avishek (Joey) Bose for useful feedbacks on early drafts of the paper.

\clearpage

\bibliography{refs}
\bibliographystyle{apalike}

\appendix
\clearpage

\part*{Appendix}

\section{Proofs}
\label{sec:proofs}
\subsection{Proof for \cref{exam:bayeserror}}

\begin{proposition}[Existence of Non-Adversarially Robust Decision Boundary]
	\label{prop:halfcubeexample}
	Let $x$ be uniformly distributed on $[0,1]^d$ and $y = 1$,
	for all $x $ such that $ x^{\top} \textbf{1} > \frac{d}{2} $ and $y = 0$ otherwise.
	Consider adversarial attack under budget $\|\delta\|_\infty < \epsilon$.
	Then for zero-one loss $L$:
	\begin{align*}
	RR^* = \mathbb{E}_{\Prob(x,y)} \max_{ \|\delta\|_\infty < \epsilon } L(Y; h^*(x + \delta)) \geq 1 - \frac{1}{4 d \epsilon^2}
	\end{align*}
	\label{prp:RR_not_robust}
\end{proposition}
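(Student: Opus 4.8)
The plan is to reduce the robust error to a one-dimensional concentration statement about the sum of coordinates and then close it with Chebyshev's inequality. Since $x$ is uniform on $[0,1]^d$ and the labels are given \emph{exactly} by the halfspace $\{x^{\top}\mathbf{1} > d/2\}$, the Bayes classifier $h^*$ is the threshold rule $h^*(x) = \mathbf{1}[x^{\top}\mathbf{1} > d/2]$ and attains $R^* = 0$. Hence, for the $0$-$1$ loss, the inner maximum defining $RR^*$ equals $1$ at a point $x$ precisely when some perturbation in the budget crosses the decision hyperplane $HP = \{x : x^{\top}\mathbf{1} = d/2\}$, and equals $0$ otherwise. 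So $RR^*$ is exactly the probability mass of the set of points lying within $\ell_\infty$-distance $\epsilon$ of $HP$.

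First I would compute this attackable region explicitly. The $\ell_\infty$-distance from a point $x$ to the hyperplane $\{z : \mathbf{1}^{\top}z = d/2\}$ is $|x^{\top}\mathbf{1} - d/2|/\|\mathbf{1}\|_1 = |S - d/2|/d$, where I write $S := x^{\top}\mathbf{1} = \sum_{i=1}^d x_i$ (this is the standard dual-norm formula, since minimizing $\|\delta\|_\infty$ subject to $\mathbf{1}^{\top}\delta = c$ gives $|c|/\|\mathbf{1}\|_1$). Therefore a point is successfully attacked exactly when $|S - d/2| < d\epsilon$, so that
\[
1 - RR^* = \Prob\!\left( |S - d/2| \ge d\epsilon \right).
\]

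Next I would control the right-hand side by concentration. The summands $x_i$ are i.i.d.\ with $\mathbb{E}[x_i] = 1/2$, so $\mathbb{E}[S] = d/2$; and since each $x_i$ takes values in $[0,1]$, Popoviciu's inequality gives $\Var(x_i) \le 1/4$ and hence $\Var(S) \le d/4$. Chebyshev's inequality then yields
\[
\Prob\!\left( |S - d/2| \ge d\epsilon \right) \le \frac{\Var(S)}{(d\epsilon)^2} \le \frac{d/4}{d^2 \epsilon^2} = \frac{1}{4 d \epsilon^2},
\]
and rearranging gives $RR^* \ge 1 - \frac{1}{4 d \epsilon^2}$, as claimed. (The exact uniform variance $1/12$ would even improve the constant, but the bound $1/4$ suffices and is what produces the stated factor.)

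The conceptual crux, and the only step needing real care, is the geometric identification of the attackable slab: because the hyperplane's normal is the all-ones vector, its $\ell_1$-norm is $d$, so an $\ell_\infty$ ball of radius $\epsilon$ reaches a full slab of $S$-width $2d\epsilon$ about the boundary, whereas $S$ has standard deviation only $O(\sqrt d)$. This mismatch between the $O(d)$ reach in the tilted direction and the $O(\sqrt d)$ spread of $S$ is the real source of the vulnerability, and it is exactly what the axis-aligned boundary of \cref{exam:bayeserror} avoids (there the reach stays $O(1)$). One subtlety I would flag is whether perturbations must stay in $[0,1]^d$; I treat the budget as a pure $\ell_\infty$ constraint, consistent with the $RR^* = 2\epsilon$ computation for the axis-aligned case, and note that the argument is robust to this choice because the attackable points concentrate near $S = d/2$, i.e.\ away from the corners of the cube where the box constraint would bind.
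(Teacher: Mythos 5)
Your proof is correct and is essentially the paper's own argument: both identify the attackable region as the slab $\{x : |x^{\top}\mathbf{1} - d/2| < d\epsilon\}$ (you via the $\ell_1$ dual-norm formula, the paper via the $\ell_2$ projection onto the normal $\mathbf{1}$ and the observation that an $\epsilon$-$\ell_\infty$ attack reaches $\ell_2$-distance $\epsilon\sqrt{d}$ along it) and then bound the complement's mass by a second-moment/Chebyshev argument using $\Var(x_i)\le 1/4$. Your explicit handling of the box-constraint subtlety and the remark that the sharper variance $1/12$ would improve the constant are welcome additions but do not change the route.
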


\begin{proof}
	The argument is well-known in concentration of measure. We provide here for the sake of completeness and adapt it to the context.
	The hyperplane $HP_2 = \{ x \in [0,1]^d : X^{\top} \textbf{1} = \frac{d}{2} \} $ defines the decision boundary.
	We first compute the orthogonal distance of a given point $y = (y_1, y_2, \cdots, y_d) = (x_1 + \delta_1, x_2 + \delta_2, \cdots, x_d + \delta_d) $ to $HP_2$.
	The point $y$ is the perturbed point within budget $\|\delta\|_\infty < \epsilon$.
	The vector $\textbf{1}$ is orthogonal to $HP_2$.
	Pick any point $x \in HP_2$, the orthogonal distance from $y$ to $HP_2$ is:
	\begin{align*}
	\ell_2(y, HP_2) = \| Proj_{\textbf{1}} (y - x )
	\| & = \| \textbf{1} \frac{ (y - x)^{\top}\textbf{1} }{ \textbf{1}^{\top}\textbf{1} }  \| \\
	& = |\frac{ y^{\top} \textbf{1} - x^{\top} \textbf{1} }{ \| \textbf{1} \| }| \\
	& = |\frac{ \delta^{\top} \textbf{1} }{ \| \textbf{1} \| }| \\
	& = |\frac{ y^{\top} \textbf{1} - \frac{d}{2} }{ \sqrt{d} }| \\
	\end{align*}
	The last two equations show that the $\ell_2$ distance under an $\ell_{\infty}$ attack can grow at the rate of $\epsilon \sqrt{d}$, for this particular hyperplane $HP_2$.

	Now we take the expectation over $[0,1]^d$,
	and note that expectation of the uniform distribution over a product space $[0,1]^d$ is the same as taking expectation on each dimension (Fubini's theorem),
	picking each random variable coordinatewise uniformly from $[0,1]$.
	\begin{align*}
	\mathbb{E} [ \ell^2_2(y, HP_2) ] = \mathbb{E} [ (\frac{ y^{\top} \textbf{1} - \frac{d}{2} }{ d })^2 ]
	& = \frac{1}{d} \mathbb{E} [ ( \sum_{i=1}^{d} y_i - \frac{d}{2} )^2 ] \\
	& = \frac{1}{d} \mathbb{V}[ \sum_{i=1}^{d} y_i ]
	= \frac{1}{d} \sum_{i=1}^{d} \mathbb{V}[y_i]
	= \frac{1}{4} \\
	\end{align*}

	Then we apply Markov's inequality, for all real number $t>0$:
	\begin{align*}
	\mathbb{P} ( \ell_2(x, H) \geq \sqrt{t}) = \mathbb{P} ( \ell_2(x, H)^2 \geq t) \leq \frac{1}{4 t}
	\end{align*}
	Finally, we observe that the longest (in terms of $\ell_2$ norm) such $\epsilon $ $ \ell_{\infty}$ attacks vector to $HP_2$ are parallel to the normal vector $\textbf{1}$ to $HP_2$.
	They have $\ell_2$ distance $\epsilon \sqrt{d}$.
	The set these attacks cover is characterized by $ \{ x \in [0, 1]^d : \ell_{\infty}(x, H) \leq \epsilon \}$ = $ \{ x \in [0, 1]^d : \ell_2(x, H) \leq \epsilon \sqrt{d} \}$.

	Let $t = \epsilon^2 d $, we have:
	\begin{align*}
	\mathbb{P} ( \ell_2(x, H) \geq \sqrt{t}) = \mathbb{P} ( \ell_2(x, H)^2 \geq t) \leq \frac{1}{4 t}
	= \frac{1}{4 \epsilon^2 d}
	\end{align*}

	In the case of zero-one loss,
	$ RR^* = \mathbb{P}( \ell_2(x, H) \leq \epsilon \sqrt{d} ) \geq 1 - \frac{1}{4 \epsilon^2 d} $.
\end{proof}

\subsection{Proof for \cref{thm:CoM}}
\begin{proof}
	\textbf{(First Inequality for Cube)}
	The proof here follows that of \citet{ledoux2005concentration}, but we track of the tight constants so as to give tighter adversarial robustness calculations.

	Let $\Phi$ be one dimensional standard normal cumulative distribution function and let
	$\mu_d$ denote $d$ dimensional Gaussian measures.
	Consider the map $T: \mathbb{R}^d \longrightarrow (0, 1)^d $:
	\[
	T(x_1, \cdots, x_d ) = ( \Phi(x_1), \cdots, \Phi(x_d) )
	\]
	$T$ pushes forward $\mu_d$ defined on $\mathbb{R}^d$ into a probability measure $\mathbb{P}$ on $(0, 1)^d$:
	\[
	\mathbb{P}(A) = \mu_d (T^{-1}(A))
	\]
	for $A \subset (0, 1)^d$.
	Next we have the following Gaussian isoperimetric/concentration inequality \citep{ledoux2005concentration}:
	\[
	\mu_d (B_{\eps}) \ge \Phi ( \Phi^{-1}( \mu_d(B) ) + \eps )
	\]
	for all $B \subset \mathbb{R}^d $ measureable.

	Now for $A \subset (0, 1)^d$, we have:
	\[
	\mathbb{P}( A_{\eps} )
	= \mu_d( T^{-1}( A_{\eps} ) )
	\ge \mu_d( T^{-1}(A)_{\eps \sqrt{2 \pi}} )
	\ge \Phi( \Phi^{-1}( \mu_d (T^{-1}(A)) + \sqrt{2 \pi} \eps ) )
	\]
	where the first inequality follows from that $T$ has Lipschitz constant $\frac{1}{\sqrt{2 \pi}}$, and thus $T^{-1}$ has Lipschitz constant $\sqrt{2 \pi}$;
	and the second one follows from Gaussian isoperimetric inequality.

	When $\mathbb{P}(A) \ge 1/2$,
	\[
	\Phi( \Phi^{-1}( \mu_d (T^{-1}(A)) + \sqrt{2 \pi} \eps ) )
	\ge
	\Phi( \Phi^{-1}( \sqrt{2 \pi} \eps ) )
	\]
	Additionally, the inequality $ \Phi(x) \ge 1 - e^{ \frac{x^2}{2} } $ implies the last inequality in the theorem.

	\textbf{(Second Inequality for Ball)}

	We first define the notion of modulus of convexity for a normed space, in this case $\ell_2$:
	\begin{align*}
	\delta_{\ell_2}(\eps)
	& = \inf\{ 1 - \| \frac{x+y}{2} \|: \| x \| = \| y \| = 1, \| x - y \| \ge \eps \} \\
	& = 1 - \sqrt{ 1 - \frac{\eps^2}{4} }
	\end{align*}
	The important property about $\delta_{\ell_2}(\eps)$ is that there is a constant $C$ such that:
	\[
	\delta_{\ell_2}(\eps) \ge C \eps^2
	\]
	By elementary algebraic calculuation,
	We can take $C = \frac{2 - \sqrt{3}}{3}$.

	By Equation (2.25) in \citep{ledoux2005concentration},
	\begin{align*}
	\mathbb{P}(A_{\eps})
	\geq
	1 - \frac{1}{\mathbb{P}(B)} (1 - \delta_{\ell_2}(\eps))^{2d}
	\ge
	1 - \frac{1}{\mathbb{P}(A)} e^{-2d \delta_{\ell_2}(\eps) }
	=
	1 - \frac{1}{\mathbb{P}(A)} e^{- 2d (\frac{2 - \sqrt{3}}{3})  \eps^2 }
	\end{align*}

\end{proof}

\section{Detailed Settings for Training}
\label{sec:details_advtrain_settings}

\subsection{Detailed settings of adversarial training}

The LeNet5 (widen factor 1) is composed of 32-channel conv filter + ReLU + size 2 max pooling + 64-channel conv filter + ReLU + size 2 max pooling + fc layer with 1024 units + ReLU + fc layer with 10 output classes.
We do not preprocess MNIST images before feeding into the model.

For training LeNet5 on MNIST variants, we use the Adam optimizer with an initial learning rate of 0.0001 and train for 100000 steps with batch size 50.

We use the WideResNet-28-4 as described in \citet{zagoruyko2016wide} for our experiments, where 28 is the depth and 4 is the widen factor.
We use ``per image standardization''
\footnote{\url{https://www.tensorflow.org/api_docs/python/tf/image/per_image_standardization}}
to preprocess CIFAR10 images, following \citet{madry2017towards}.

For training WideResNet on CIFAR10 variants, we use stochastic gradient descent with momentum 0.9 and weight decay 0.0002. We train 80000 steps in total with batch size 128.
The learning rate is set to 0.1 at step 0, 0.01 at step 40000, and 0.001 at step 60000.

We performed manual hyperparameter search for our initial experiment and do not observe improvements over the above settings. Therefore we used these settings throughout the all the experiments in the paper unless otherwise indicated.

\subsection{LP robust model described in Section \ref{sec:theory_and_provable_cases}}
\label{sec:details_lp_bmnist}

For the linear programming based provably robust model \citep{wong2018provable} (LP-robust model).
We trained a ConvNet identical to the one in the original paper.
It has 2 convolutional layers, with 16
and 32 channels, each with a stride of 2; and 2 fully connected layers, the first one maps the flattened convolution features to hidden dimension 100, the second maps to 10 logit units.
We use ReLUs as the nonlinear activation and there is no max pooling
in the network.

We train for 100 epochs with batch size 50. The first 50 epochs are warm start epochs where epsilon increases from 0.01 to 0.3 linearly.
We use Adam optimizer \citep{kingma2014adam} with a constant learning rate of 0.001.

\section{Detailed Experimental Results}
\label{sec:detailed_results}

We listed exact numbers of experiments involved in the main body in Table \ref{tab:mnist_results}, \ref{tab:cifar10_results}, \ref{tab:capacity_mnist} and \ref{tab:capacity_cifar10}.

\begin{table}[t]
\caption{Performance and Robustness of models trained on MNIST variants. }
\label{tab:mnist_results}
\vskip 0.15in
\begin{center}
\begin{tiny}
\begin{sc}
\begin{tabular}{cc|cccc}
\toprule
& Standard Training & \multicolumn{3}{c}{PGD Training} \\
\midrule
\shortstack{MNIST\\variants} & \shortstack{Test\\Acc} &  \shortstack{Test\\Acc} & \shortstack{Robust Accuracy \\ $\epsilon=0.3$} & \shortstack{Robustness w.r.t. Predictions\\ $\epsilon=0.3$} \\
\midrule

binarized & 98.5 \% & 98.9 \% & 98.1 \% & 98.5 \% \\
original & 99.3 \% & 99.2 \% & 95.1 \% & 95.1 \% \\
smooth 2 & 99.3 \% & 98.9 \% & 93.0 \% & 93.1 \% \\
smooth 3 & 99.2 \% & 99.0 \% & 91.3 \% & 91.4 \% \\
smooth 4 & 99.1 \% & 98.8 \% & 90.3 \% & 90.4 \% \\
smooth 5 & 99.0 \% & 98.7 \% & 89.6 \% & 89.7 \% \\
smooth 6 & 99.1 \% & 98.5 \% & 87.6 \% & 87.7 \% \\
smooth 7 & 99.0 \% & 98.3 \% & 85.4 \% & 85.5 \% \\
smooth 8 & 99.0 \% & 97.9 \% & 83.1 \% & 83.3 \% \\

\bottomrule
\end{tabular}
\end{sc}
\end{tiny}
\end{center}
\end{table}

\begin{table}[t]
\caption{Performance and Robustness of models trained on CIFAR10 variants. }
\label{tab:cifar10_results}
\vskip 0.15in
\begin{center}
\begin{tiny}
\begin{sc}
\begin{tabular}{cc|ccc}
\toprule
& Standard Training & \multicolumn{3}{c}{PGD Training} \\
\midrule
\shortstack{CIFAR10\\variants} & \shortstack{Test\\Acc} &  \shortstack{Test\\Acc} & \shortstack{Robust Accuracy \\ $\epsilon=8/255$} & \shortstack{Robustness w.r.t. Predictions\\ $\epsilon=8/255$} \\
\midrule

saturate 1 & 93.8 \% & 77.5 \% & 33.0 \% & 33.6 \% \\
saturate 1.5 & 94.7 \% & 83.7 \% & 38.7 \% & 39.1 \% \\
saturate 1.75 & 95.2 \% & 84.9 \% & 41.1 \% & 41.5 \% \\
original & 95.0 \% & 85.4 \% & 43.2 \% & 43.6 \% \\
saturate 2.25 & 94.8 \% & 85.4 \% & 44.4 \% & 44.9 \% \\
saturate 2.5 & 94.8 \% & 84.8 \% & 46.4 \% & 47.0 \% \\
saturate 3 & 94.5 \% & 82.9 \% & 51.7 \% & 52.9 \% \\
saturate 4 & 93.8 \% & 80.4 \% & 64.0 \% & 68.7 \% \\
saturate 8 & 93.3 \% & 80.4 \% & 78.1 \% & 93.8 \% \\
saturate 16 & 92.9 \% & 79.9 \% & 79.4 \% & 98.4 \% \\
saturate 64 & 89.6 \% & 79.5 \% & 79.3 \% & 99.1 \% \\
saturate 128 & 85.3 \% & 80.2 \% & 79.9 \% & 99.1 \% \\
saturate 256 & 83.0 \% & 80.0 \% & 79.7 \% & 99.2 \% \\
saturate inf & 80.3 \% & 80.0 \% & 79.7 \% & 99.2 \% \\

\bottomrule
\end{tabular}
\end{sc}
\end{tiny}
\end{center}
\end{table}

\begin{table}[t]
\caption{Performance and robustness of different sized LeNet5 models on MNIST variants
}
\label{tab:capacity_mnist}
\vskip 0.15in
\begin{center}
\begin{tiny}
\begin{sc}
\begin{tabular}{ccccccc|cccccc}
\toprule
& \multicolumn{12}{c}{Standard Training, Accuracy} \\
& \multicolumn{6}{c}{ Training Set} & \multicolumn{6}{c}{ Test Set}\\
\midrule

Widen factor& 0.125 & 0.25 & 0.5 & 1 & 2 & 4 & 0.125 & 0.25 & 0.5 & 1 & 2 & 4\\
\midrule
binarized & 99.9\% & 100.0\% & 100.0\% & 99.6\% & 100.0\% & 100.0\% & 98.7\% & 99.0\% & 99.2\% & 98.5\% & 99.4\% & 99.2\%\\
original & 100.0\% & 100.0\% & 100.0\% & 100.0\% & 100.0\% & 100.0\% & 98.8\% & 99.2\% & 99.2\% & 99.3\% & 99.4\% & 99.3\%\\
smooth 2 & 99.9\% & 100.0\% & 100.0\% & 100.0\% & 100.0\% & 100.0\% & 98.8\% & 99.0\% & 99.1\% & 99.3\% & 99.3\% & 99.4\%\\
smooth 3 & 99.9\% & 99.9\% & 100.0\% & 100.0\% & 100.0\% & 100.0\% & 98.8\% & 98.8\% & 99.2\% & 99.2\% & 99.1\% & 99.3\%\\
smooth 4 & 99.9\% & 100.0\% & 100.0\% & 100.0\% & 100.0\% & 100.0\% & 98.7\% & 99.0\% & 99.0\% & 99.1\% & 99.4\% & 99.4\%\\
smooth 5 & 99.8\% & 100.0\% & 100.0\% & 100.0\% & 100.0\% & 100.0\% & 98.5\% & 99.0\% & 99.2\% & 99.0\% & 99.3\% & 99.3\%\\
smooth 6 & 99.8\% & 100.0\% & 100.0\% & 100.0\% & 100.0\% & 100.0\% & 98.4\% & 98.9\% & 99.0\% & 99.1\% & 99.2\% & 99.3\%\\
smooth 7 & 99.8\% & 99.9\% & 100.0\% & 100.0\% & 100.0\% & 100.0\% & 98.5\% & 98.8\% & 99.0\% & 99.0\% & 99.3\% & 99.3\%\\
smooth 8 & 99.7\% & 100.0\% & 100.0\% & 100.0\% & 100.0\% & 100.0\% & 98.4\% & 98.9\% & 98.9\% & 99.0\% & 99.2\% & 99.0\%\\

\bottomrule
\end{tabular}

\begin{tabular}{ccccccc|cccccc}
\toprule
& \multicolumn{12}{c}{PGD Training, Accuracy} \\
& \multicolumn{6}{c}{ Training Set} & \multicolumn{6}{c}{ Test Set}\\
\midrule

Widen factor& 0.125 & 0.25 & 0.5 & 1 & 2 & 4 & 0.125 & 0.25 & 0.5 & 1 & 2 & 4\\
\midrule
binarized & 97.8\% & 99.6\% & 100.0\% & 100.0\% & 100.0\% & 100.0\% & 97.4\% & 98.3\% & 98.8\% & 98.9\% & 99.0\% & 99.2\%\\
original & 97.0\% & 98.4\% & 99.8\% & 100.0\% & 100.0\% & 100.0\% & 97.0\% & 98.2\% & 98.9\% & 99.2\% & 99.1\% & 99.2\%\\
smooth 2 & 96.1\% & 98.1\% & 99.0\% & 99.9\% & 100.0\% & 100.0\% & 96.1\% & 97.8\% & 98.5\% & 98.9\% & 99.0\% & 99.0\%\\
smooth 3 & 96.3\% & 97.8\% & 98.9\% & 99.7\% & 99.9\% & 100.0\% & 96.5\% & 97.6\% & 98.6\% & 99.0\% & 99.1\% & 99.1\%\\
smooth 4 & 95.3\% & 97.3\% & 98.5\% & 99.5\% & 99.8\% & 99.9\% & 95.4\% & 97.2\% & 98.1\% & 98.8\% & 99.0\% & 99.0\%\\
smooth 5 & 94.9\% & 96.5\% & 98.0\% & 99.3\% & 99.6\% & 99.8\% & 95.0\% & 96.5\% & 97.9\% & 98.7\% & 98.9\% & 98.9\%\\
smooth 6 & 93.2\% & 95.6\% & 97.4\% & 99.0\% & 99.5\% & 99.7\% & 93.5\% & 95.7\% & 97.1\% & 98.5\% & 98.7\% & 98.7\%\\
smooth 7 & 91.9\% & 95.0\% & 97.5\% & 98.7\% & 99.2\% & 99.4\% & 92.4\% & 95.2\% & 97.2\% & 98.3\% & 98.5\% & 98.7\%\\
smooth 8 & 89.4\% & 94.2\% & 96.5\% & 98.4\% & 99.0\% & 99.3\% & 89.7\% & 94.4\% & 96.4\% & 97.9\% & 98.2\% & 98.4\%\\

\bottomrule
\end{tabular}

\begin{tabular}{ccccccc|cccccc}
\toprule
&  \multicolumn{12}{c}{PGD Training, Robust Accuracy} \\
& \multicolumn{6}{c}{ Training Set} & \multicolumn{6}{c}{ Test Set}\\
\midrule

Widen factor& 0.125 & 0.25 & 0.5 & 1 & 2 & 4 & 0.125 & 0.25 & 0.5 & 1 & 2 & 4\\
\midrule
binarized & 95.2\% & 98.5\% & 100.0\% & 100.0\% & 100.0\% & 100.0\% & 94.5\% & 96.5\% & 98.0\% & 98.1\% & 98.0\% & 98.0\%\\
original & 86.9\% & 90.8\% & 97.9\% & 99.3\% & 99.6\% & 99.8\% & 87.1\% & 89.9\% & 95.2\% & 95.1\% & 94.8\% & 94.9\%\\
smooth 2 & 80.5\% & 87.6\% & 90.9\% & 98.0\% & 99.1\% & 99.5\% & 81.2\% & 87.0\% & 88.7\% & 93.0\% & 92.3\% & 92.1\%\\
smooth 3 & 75.2\% & 82.0\% & 90.3\% & 95.5\% & 97.8\% & 98.7\% & 75.7\% & 81.5\% & 88.5\% & 91.3\% & 91.6\% & 90.8\%\\
smooth 4 & 71.9\% & 77.6\% & 87.5\% & 93.9\% & 96.8\% & 97.9\% & 72.7\% & 77.7\% & 86.3\% & 90.3\% & 90.6\% & 90.0\%\\
smooth 5 & 65.7\% & 77.1\% & 85.7\% & 92.5\% & 94.6\% & 95.0\% & 66.2\% & 77.1\% & 85.1\% & 89.6\% & 89.8\% & 88.4\%\\
smooth 6 & 58.0\% & 71.5\% & 80.5\% & 90.6\% & 93.1\% & 93.8\% & 59.3\% & 72.0\% & 80.2\% & 87.6\% & 88.0\% & 87.2\%\\
smooth 7 & 61.7\% & 74.2\% & 83.3\% & 87.6\% & 90.5\% & 92.6\% & 62.8\% & 75.3\% & 83.0\% & 85.4\% & 86.7\% & 87.8\%\\
smooth 8 & 70.3\% & 72.4\% & 80.3\% & 85.3\% & 90.5\% & 88.7\% & 71.7\% & 73.2\% & 80.3\% & 83.1\% & 86.9\% & 83.8\%\\

\bottomrule
\end{tabular}

\end{sc}
\end{tiny}
\end{center}
\vskip -0.1in
\end{table}

\begin{table}[t]
\caption{Performance and robustness of different sized Wide ResNet models on CIFAR10 variants
}
\label{tab:capacity_cifar10}
\vskip 0.15in
\begin{center}
\begin{tiny}
\begin{sc}
\begin{tabular}{cccc|ccc}
\toprule
& \multicolumn{6}{c}{Standard Training, Accuracy} \\
& \multicolumn{3}{c}{ Training Set} & \multicolumn{3}{c}{ Test Set}\\
\midrule

Widen factor& 0.25 & 1 & 4 & 0.25 & 1 & 4\\
\midrule
saturate 1 & 85.5\% & 99.9\% & 100.0\% & 82.4\% & 91.1\% & 93.8\% \\
saturate 1.5 & 87.0\% & 99.9\% & 100.0\% & 84.2\% & 92.1\% & 94.7\% \\
saturate 1.75 & 87.4\% & 99.9\% & 100.0\% & 84.5\% & 93.0\% & 95.2\% \\
original & 87.2\% & 99.9\% & 100.0\% & 84.4\% & 92.5\% & 95.0\% \\
saturate 2.25 & 87.3\% & 99.9\% & 100.0\% & 84.5\% & 92.5\% & 94.8\% \\
saturate 2.5 & 86.4\% & 99.9\% & 100.0\% & 83.7\% & 92.3\% & 94.8\% \\
saturate 3 & 86.2\% & 99.9\% & 100.0\% & 84.0\% & 92.2\% & 94.5\% \\
saturate 4 & 85.8\% & 99.9\% & 100.0\% & 83.1\% & 91.1\% & 93.8\% \\
saturate 8 & 84.6\% & 99.8\% & 100.0\% & 81.2\% & 90.1\% & 93.3\% \\
saturate 16 & 83.5\% & 99.7\% & 100.0\% & 81.0\% & 89.4\% & 92.9\% \\
saturate 64 & 80.5\% & 99.4\% & 100.0\% & 79.2\% & 86.9\% & 89.6\% \\
saturate 128 & 77.1\% & 98.7\% & 100.0\% & 74.6\% & 83.0\% & 85.3\% \\
saturate 256 & 73.7\% & 97.6\% & 100.0\% & 70.7\% & 76.5\% & 83.0\% \\
saturate inf & 73.2\% & 97.3\% & 99.9\% & 70.6\% & 76.3\% & 80.3\% \\

\bottomrule
\end{tabular}

\begin{tabular}{cccc|ccc}
\toprule
& \multicolumn{6}{c}{PGD Training, Accuracy} \\
& \multicolumn{3}{c}{ Training Set} & \multicolumn{3}{c}{ Test Set}\\
\midrule

Widen factor& 0.25 & 1 & 4 & 0.25 & 1 & 4\\
\midrule
saturate 1 & 45.4\% & 68.3\% & 93.1\% & 46.8\% & 66.9\% & 77.5\% \\
saturate 1.5 & 52.1\% & 76.5\% & 98.0\% & 53.3\% & 74.1\% & 83.7\% \\
saturate 1.75 & 53.8\% & 79.5\% & 99.2\% & 55.3\% & 77.0\% & 84.9\% \\
original & 56.1\% & 81.4\% & 99.7\% & 57.1\% & 78.4\% & 85.4\% \\
saturate 2.25 & 56.8\% & 82.7\% & 99.9\% & 58.1\% & 78.8\% & 85.4\% \\
saturate 2.5 & 57.6\% & 83.9\% & 100.0\% & 58.3\% & 79.1\% & 84.8\% \\
saturate 3 & 60.0\% & 86.3\% & 100.0\% & 60.8\% & 79.5\% & 82.9\% \\
saturate 4 & 62.8\% & 91.3\% & 100.0\% & 63.7\% & 77.9\% & 80.4\% \\
saturate 8 & 67.7\% & 96.1\% & 100.0\% & 67.0\% & 76.6\% & 80.4\% \\
saturate 16 & 67.2\% & 96.1\% & 99.9\% & 66.0\% & 76.4\% & 79.9\% \\
saturate 64 & 70.0\% & 96.5\% & 99.9\% & 68.6\% & 75.8\% & 79.5\% \\
saturate 128 & 71.4\% & 96.4\% & 99.9\% & 68.9\% & 76.6\% & 80.2\% \\
saturate 256 & 68.6\% & 96.9\% & 99.9\% & 65.7\% & 76.6\% & 80.0\% \\
saturate inf & 71.5\% & 96.9\% & 99.9\% & 69.7\% & 76.1\% & 80.0\% \\

\bottomrule
\end{tabular}

\begin{tabular}{cccc|ccc}
\toprule
&  \multicolumn{6}{c}{PGD Training, Robust Accuracy} \\
& \multicolumn{3}{c}{ Training Set} & \multicolumn{3}{c}{ Test Set}\\
\midrule

Widen factor& 0.25 & 1 & 4 & 0.25 & 1 & 4\\
\midrule
saturate 1 & 24.0\% & 36.9\% & 71.1\% & 25.6\% & 34.4\% & 33.0\% \\
saturate 1.5 & 29.0\% & 44.4\% & 81.3\% & 31.6\% & 40.7\% & 38.7\% \\
saturate 1.75 & 30.9\% & 47.8\% & 86.0\% & 32.7\% & 44.0\% & 41.1\% \\
original & 32.4\% & 50.4\% & 90.3\% & 35.0\% & 45.5\% & 43.2\% \\
saturate 2.25 & 33.9\% & 52.9\% & 93.4\% & 36.1\% & 47.3\% & 44.4\% \\
saturate 2.5 & 35.5\% & 55.4\% & 96.0\% & 37.5\% & 49.1\% & 46.4\% \\
saturate 3 & 38.4\% & 61.5\% & 98.9\% & 40.6\% & 52.5\% & 51.7\% \\
saturate 4 & 44.9\% & 77.4\% & 99.7\% & 46.1\% & 60.4\% & 64.0\% \\
saturate 8 & 62.3\% & 95.0\% & 99.8\% & 61.9\% & 74.9\% & 78.1\% \\
saturate 16 & 66.0\% & 95.5\% & 99.9\% & 65.0\% & 75.5\% & 79.4\% \\
saturate 64 & 69.1\% & 96.3\% & 99.9\% & 67.6\% & 75.5\% & 79.3\% \\
saturate 128 & 70.7\% & 96.2\% & 99.9\% & 68.2\% & 76.2\% & 79.9\% \\
saturate 256 & 68.0\% & 96.7\% & 99.9\% & 65.2\% & 76.3\% & 79.7\% \\
saturate inf & 70.9\% & 96.7\% & 99.9\% & 69.2\% & 75.8\% & 79.7\% \\

\bottomrule
\end{tabular}

\end{sc}
\end{tiny}
\end{center}
\vskip -0.1in
\end{table}
 
\section{Detailed Analyses}

\subsection{Detailed Analysis of Effects of Data Domain Boundary}
\label{sec:detailed_domain_boundary}
One natural hypothesis about the reason of achieving better robustness could be that it is the effect of the boundaries.
Indeed, if the data distribution is closer to the data domain boundary, the valid perturbation space, the $\epsilon$-$\ell_\infty$ ball may be restricted since it will intersect with the boundary.
We then test the correlation between ``how close the data distribution is to the boundary'' and its achievable robustness, by examining the volume of the allowed perturbed box across different datasets.

The intersection of the data domain, unit cube $[0, 1]^d$, with the allowed perturbation space, $\epsilon$-$\ell_\infty$ ball $[x_i - \epsilon, x_i + \epsilon]^d$, is the hyperrectangle $[\max\{x_i - \epsilon, 0\}, \min\{x_i + \epsilon, 1\}]^d$, where $i=1, \cdots, d$ are the indexes over input dimensions.
The size of the available perturbation space at $x$ and $\epsilon$ is defined by the volume of this hyperrectangle:
$$
\Vol(x, \epsilon) = \prod_{i=1}^d (\min\{x_i + \epsilon_i, 1\} - \max\{x_i - \epsilon_i, 0\})
$$
In high dimensional space, when $\epsilon$ is fixed, this volume varies greatly based on the location of $x$. For example, if $x$ is on one of the corners of the unit cube, $\Vol(x_{corner}, \epsilon)=\epsilon^d$.
If each dimension of $x$ is at least $\epsilon$ away from all the data boundaries, then the volume of the hyperrectangle is $\Vol(x_{inside}, \epsilon)=(2\epsilon)^d$. Therefore there can be $2^d$ times difference of perturbable space between different data points. As shown in the average log perturbable volumes Table \ref{tab:volumes},
we can see that different variations of datasets has significantly different
perturbable volumes, with the same trend with previously described.
It is notable that for the original CIFAR10 datasets has log volume -12354,
which is very close to the -12270.
The different of 84 bits indicates on average,
the perturbation space is $2^{84}$ smaller than the full $\epsilon$-$\ell_\infty$ ball if there is no intersection with the data domain boundary.
Volume differences between different saturation or smooth level can be interpreted in the similar way.
Note that for CIFAR10 images with large saturation, although they appear
similar to human, they actually have very large differences in terms of
perturbable volumes.

\begin{table}[t]
\caption{Perturbable volumes of different variants of MNIST and CIFAR10.
Values shown in table are the average log value (in bits) of volumes of test data. For MNIST, $\epsilon=0.3$, for CIFAR10 $\epsilon=8/255$.}
\label{tab:volumes}
\vskip 0.15in
\begin{center}
\begin{tiny}
\begin{sc}
\begin{tabular}{cccc|cccccccc}
\toprule
 \multicolumn{4}{c}{MNIST (valid range -1361 to -577)} & \multicolumn{8}{c}{CIFAR10 (valid range -15342 to -12270)} \\
\midrule
binary & original & 3 & 5 & original & 4 & 8 & 16 & 64 & 256 & 512 & inf \\
\midrule
-1361 &-1297 &-1265& -1234 & -12354 & -12394 & -12477 & -12657 & -13620 & -14747 & -15028 & -15342 \\

\bottomrule
\end{tabular}
\end{sc}
\end{tiny}
\end{center}
\vskip -0.1in
\end{table} 

If the perturbable volume hypothesis holds, then we should observe
significantly lower accuracy under PGD attack
if we allow perturbation outside of data domain boundary.
Since this greatly increases the perturbable volume.
We measure the accuracy under PGD attack with and without considering
data domain boundary for both MNIST and CIFAR10 variants.
The results are shown in Table \ref{tab:pgd_bound}. ``With considering boundary'' corresponds to regular PGD attacks.
We can see that allowing PGD to perturb out of bound do not reduce
accuracy under attack. This means that PGD is not able to use the
significantly larger additional volumes even for binarized MNIST or highly
saturated CIFAR10, whose data points are on or very close to the corner.
In some cases, allowing perturbation outside of domain boundary makes the attack slightly less effective. This might be due to that data domain boundary constrained the perturbation to be in an ``easier'' region.
This might seem surprising considering the huge difference in perturbable volumes,
these results conform with empirical results in previous research \citep{goodfellow2014explaining,warde-farley2016adversarial} that
adversarial examples appears in certain directions instead of
being distributed in small pockets across space.
Therefore, the perturbable volume hypothesis is rejected.

\begin{table}[t]
\caption{PGD attack results with and without domain boundary constraints on MNIST and CIFAR10}
\label{tab:pgd_bound}
\vskip 0.15in
\begin{center}
\begin{scriptsize}
\begin{sc}
\begin{tabular}{ccc|ccc}
\toprule
\multicolumn{3}{c}{MNIST} & \multicolumn{3}{c}{CIFAR10} \\
\midrule
\shortstack{MNIST\\variants} & \shortstack{Robust Accuracy\\w/ bound} &  \shortstack{Robust Accuracy\\w/o bound} & \shortstack{CIFAR10\\variants} & \shortstack{Robust Accuracy\\w/ bound} &  \shortstack{Robust Accuracy\\w/o bound}  \\
\midrule

binarized & 98.1 \% & 96.1 \% & saturate 1 & 33.0 \% & 32.7 \% \\
original & 95.1 \% & 95.1 \% &  original & 43.2 \% & 43.0 \%\\
smooth 2 & 93.0 \% & 92.9 \% & saturate 4  & 64.0 \% & 64.0 \% \\
smooth 3 & 91.3 \% & 91.5 \% & saturate 8  & 78.1 \% & 78.1 \% \\
smooth 4 & 90.3 \% & 90.6 \% & saturate 16 &  79.4 \% & 79.4 \% \\
smooth 5 & 89.6 \% & 89.9 \% & saturate inf & 79.7 \% & 79.4 \% \\

\bottomrule
\end{tabular}
\end{sc}
\end{scriptsize}
\end{center}
\vskip -0.1in
\end{table}

\subsection{Detailed Analyses of Inter-class Distance}

\subsubsection{Calculation of Inter-class Distance}
\label{sec:detailed_inter_class_distance}

We calculate the inter-class distance as follows.
Let $D = \{x_i\}$ denote the set of all the input data points, $D_c=\{x_i | y_i =c\}$ denote the set of all the data points in class $c$, and $D_{\neg c} = \{x_i|y_i \neq c\}$ denote all the data points not in class $c$.
Our goal is to calculate $d(D_c, D_{\neg c})$ for all the classes, where $d(D_c, D_{\neg c})$ approximates the margin between class $c$ and the rest.
To estimate $d(D_c, D_{\neg c})$, we first compute the margin for each data point $x$ in class $c$.
To do that, we calculate the average $\|x - x_j\|_2$, where $x_j \in D_{\neg c}$ is one of $x$'s 10\% nearest neighbors in $D_{\neg c}$.
Lastly, the inter-class distance of class $c$, $d(D_c, D_{\neg c})$,  is
then calculated as the average of smallest 10\% $d(x, D_{\neg c})$ for $x \in D_c$.

Note that we choose $\ell_2$ distance for inter-class distance, instead of using the $\ell_\infty$ which measures the robustness. This is because $\ell_\infty$-distance between data examples is essentially the max over the per pixel differences, which is always very close to 1. Therefore the $\ell_\infty$-distance between data examples is not really representative / distinguishable.

Figure \ref{fig:dists} shows the inter-class distances (averaged over all classes) calculated on MNIST and CIFAR10 variants.
The binarized MNIST has a significantly larger inter-class distance.
As smoothing kernel size increases, the distance also decrease slightly.
On CIFAR10 variants, as the saturation level gets higher,
the inter-class distance increases monotonically.
We also directly plot inter-class distance vs robust accuracy on MNIST and CIFAR10 variants. In general, inter-class distance shows a strong positive correlation with robust accuracy under these transformations.
With one exception that original MNIST has smaller inter-class distance, but is sightly more robust than smooth-2 MNIST.
This, together with the counter examples we gave in Table \ref{tab:scaled_mnist}, suggests that inter-class distance cannot fully explain the robust variation
across different dataset variants.

\begin{figure}[t]
\begin{subfigure}[b]{0.9\textwidth}
\centering
\includegraphics[width=\linewidth]{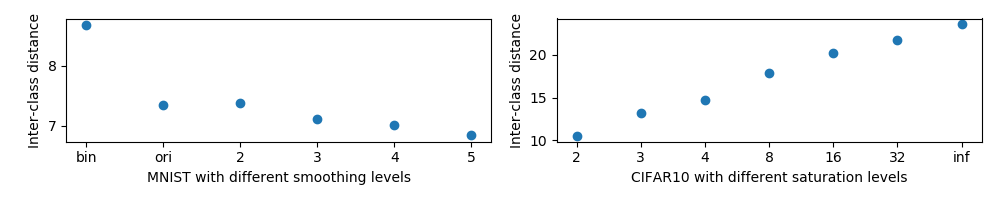}
\caption{Processing levels vs inter-class distances}
\end{subfigure}

\begin{subfigure}[b]{0.9\textwidth}
\centering
\includegraphics[width=\linewidth]{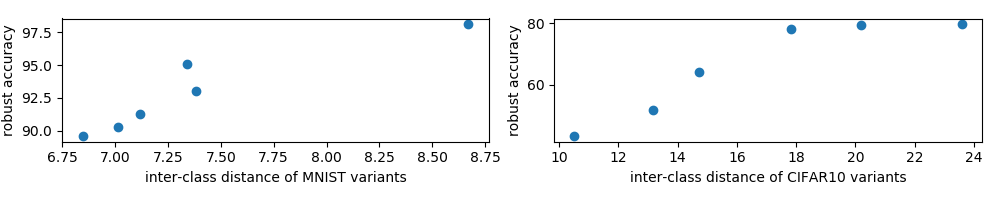}
\caption{Inter-class distance vs robust accuracy}
\end{subfigure}

\caption{Inter-class distance's influence on robust accuracy on different MNIST and CIFAR10 variants}
\label{fig:dists}
\end{figure}

\subsubsection{Inter-Class Distance Could Potentially Influence Required Model Capacity}
We attempt to understand the relation between the inter-class distance of a dataset and its achievable robustness in this section.
We first illustrate our intuition in a synthetic experiment, where a ReLU network is trained to perfectly separate 2 concentric spheres  \citep{gilmer2018adversarial}, as shown in Figure \ref{fig:dist_capacity_illustration}.
Here the inter-class distance is the width of the ring between two spheres.
In such example, adversarial training is actually closely related to the inter-class distance of the data.
In fact, in the simple setting where the classifier is linear, it has been shown in \citet{xu2009robustness} that adversarial training, as a particular form of robust optimization, is equivalent to maximizing the classification margins.
Following this intuition,
one can easily see that the effect of adversarial training is to push two spheres close to each other,
and requires the network to perfectly separate the new spheres with much smaller inter-class.

Intuitively, when the inter-class distance is large, i.e. the gap between two spheres are large, a reasonable model should be able to achieve good standard accuracy.
We have also observed such phenomenon on original MNIST and saturated CIFAR10 (say level 16).
As the inter-class distance gets smaller, although the model capacity could still be enough for the standard training, it may no longer be enough for adversarial training, upon which we would observe that although the test accuracies stay similar, accuracies under adversarial attack significantly would drop.
We have also seen similar behavior on smooth MNIST data and smaller level of saturated CIFAR10 data.
Finally, when the inter-class distance is so small such that even a high clean test accuracy may be difficult to achieve.

Considering robust accuracy as the clean accuracy with a smaller gap between the spheres,
the next theorem provides a theoretical guarantee in relating together the difficulty of attaining good accuracy under attack and the model capacity \citep{ball1997elementary}, verifying our intuition above.
Note that one way to measure the capacity of a ReLU network is by counting the number of its induced piece-wise linear region, which is closely related to the number of facets of its decision boundary.
\begin{theorem}
    Let $d(K, L)$ between symmetric convex bodies $K$ and
    $L$ denote the least positive $d$ for which there is a linear image $\tilde L$ of $L$ such that $\tilde L \subset K \subset d\tilde L$.
    Let $K$ be a (symmetric) polytope in $\mathbb{R}^n$ with $d(K, B_2^n) = d$. Then
    $K$ has at least $e^{n/(2d^2)}$ facets. On the other hand, for each $n$, there is a polytope with $4n$ facets whose distance from the ball is at most 2.
\end{theorem}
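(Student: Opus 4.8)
The plan is to prove the two halves separately, with the facet lower bound being the substantive part, handled by a polarity reduction followed by a spherical covering argument, and the existence statement handled by an explicit net construction.

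First I would put $K$ in a convenient position. Since $d(K,B_2^n)=d$, there is an ellipsoid $E$ (a linear image of $B_2^n$) with $E\subseteq K\subseteq dE$; applying the linear map that sends $E$ to $B_2^n$ turns $K$ into a symmetric polytope with the same number of facets satisfying $B_2^n\subseteq K\subseteq dB_2^n$. Writing the $N$ facets via their outer normals as $K=\{x:\ |\langle x,u_j\rangle|\le 1,\ j=1,\dots,N/2\}$, the two inclusions dualize cleanly: $B_2^n\subseteq K$ forces $\|u_j\|\le 1$, and $K\subseteq dB_2^n$ forces $\tfrac1d B_2^n\subseteq K^\circ=\mathrm{conv}\{\pm u_j\}$. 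Rescaling, $Q:=dK^\circ$ is a symmetric polytope with at most $N$ vertices $v$, each of norm $\|v\|\le d$, and $B_2^n\subseteq Q$.

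The heart of the argument is then a covering estimate on $S^{n-1}$. Because $B_2^n\subseteq Q$, the support function satisfies $h_Q(\theta)=\max_v\langle\theta,v\rangle\ge 1$ for every unit $\theta$, so the caps $C_v=\{\theta\in S^{n-1}:\langle\theta,v\rangle\ge 1\}$ cover the sphere. As $\|v\|\le d$, each $C_v$ sits inside a cap of the form $\{\theta:\langle\theta,e\rangle\ge 1/d\}$, whose normalized surface measure $\sigma$ is at most $e^{-n/(2d^2)}$ by the standard spherical cap bound. Summing the measures of the covering caps gives $1=\sigma(S^{n-1})\le N\,e^{-n/(2d^2)}$, hence $N\ge e^{n/(2d^2)}$. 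For the converse it suffices, again by polarity, to build a polytope $Q$ with few vertices and $B_2^n\subseteq Q\subseteq 2B_2^n$ and take $K=Q^\circ$: letting $\mathcal N$ be a $1$-net of $S^{n-1}$ and $Q=\mathrm{conv}(\mathcal N)$, a packing bound gives $|\mathcal N|\le 3^n$ (a $\delta$-net has at most $(1+2/\delta)^n$ points), while for the nearest net point $u$ to any unit $\theta$ one has $\langle\theta,u\rangle=1-\tfrac12\|\theta-u\|^2\ge \tfrac12$, so $\tfrac12 B_2^n\subseteq Q\subseteq B_2^n$ and $d(Q,B_2^n)\le 2$; thus $K=Q^\circ$ is a polytope with an exponential-in-$n$ number of facets lying within distance $2$ of the ball, consistent with the lower bound.

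I expect the main obstacle to be the spherical cap estimate with the precise constant that yields exactly $e^{n/(2d^2)}$, together with the factor-of-two bookkeeping relating facets of $K$ to vertices of $K^\circ$ and the symmetric pairing of normals. I would obtain the cap bound either by integrating the elementary inequality $(1-s^2)^{(n-1)/2}\le e^{-(n-1)s^2/2}$ over the relevant cap, or by invoking directly the concentration-of-measure phenomenon already recorded in \cref{thm:CoM}, which is exactly the same Gaussian-type tail governing mass near the equator of the ball.
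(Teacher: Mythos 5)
This theorem is quoted verbatim from Ball (1997) and the paper supplies no proof of its own, so there is nothing to compare against line by line; judged on its merits, your argument for the facet lower bound is correct and is in fact the classical proof from the cited source. Positioning $K$ so that $B_2^n\subseteq K\subseteq dB_2^n$, passing to the polar, and covering $S^{n-1}$ by the caps $\{\theta:\langle\theta,v\rangle\ge 1\}$ attached to the at most $N$ vertices of $dK^\circ$, each of norm at most $d$, is Ball's argument up to the (harmless) polarity detour: one could equally well argue directly that for each unit $\theta$ some facet normal $u_j$ with $\|u_j\|\le 1$ must satisfy $\langle\theta,u_j\rangle\ge 1/d$. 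The one step needing care is the cap estimate: integrating $(1-s^2)^{(n-1)/2}\le e^{-(n-1)s^2/2}$ loses a factor in the exponent and does not by itself give the clean bound $\sigma(\{\theta:\theta_1\ge s\})\le e^{-ns^2/2}$ that produces exactly $e^{n/(2d^2)}$; the standard route (Ball's Lemma~2.2) encloses the spherical cone over the cap in a ball of radius $\sqrt{1-s^2}$ and compares volumes. Note also that \cref{thm:CoM} bounds the measure of $\eps$-neighborhoods of large sets, which is not the statement you need here, so it cannot simply be ``invoked directly.''

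The genuine issue is the second half. As printed, the claim that some polytope with $4n$ facets lies within distance $2$ of the ball is inconsistent with the first half: such a polytope would need at least $e^{n/8}$ facets, which exceeds $4n$ for all large $n$. The intended statement (as in Ball's original) is $4^n$ facets. Your construction --- the polar of the convex hull of a symmetric $1$-net of $S^{n-1}$, using $|\mathcal N|\le 3^n$ and $\langle\theta,u\rangle=1-\tfrac12\|\theta-u\|^2\ge\tfrac12$ for the nearest net point --- correctly establishes the $4^n$ version, yielding at most $2\cdot 3^n\le 4^n$ facets, but it does not and cannot prove the statement as literally written. You should flag the typo explicitly rather than silently substituting the corrected claim.
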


\begin{figure}[htb]
\label{fig:adv_sphere}
\centering
\includegraphics[width=0.8\linewidth]{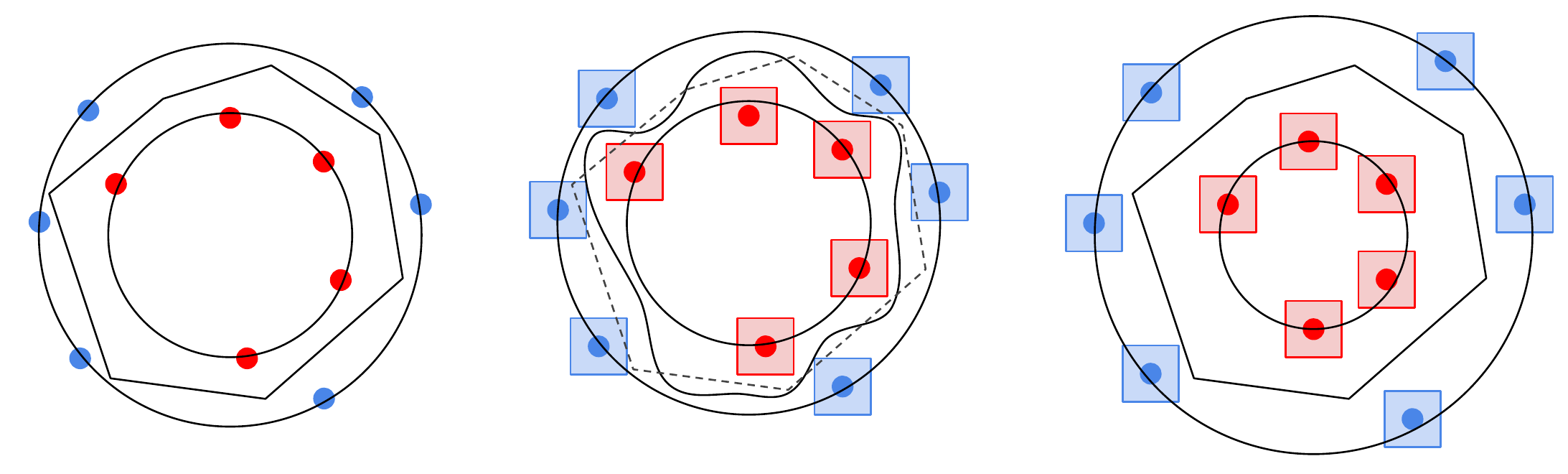}
\caption{Illustration of the relationship between the inter-class distance and the required model capacity.
Left: when distance is small, a small capacity polytope classifier could separate original data; middle: when distance is small, the small capacity polytope classifier is not able to separate data points ``robustly'', but a more complex nonlinear classifier could; right:when distance is large, the small capacity polytope classifier can separate data points ``robustly''.}
\label{fig:dist_capacity_illustration}
\end{figure}

The above analysis is partially supported by our experiments on model capacity in Section \ref{sec:capacity_and_sample_complexity}. However, as we've shown in Section \ref{sec:inter_class_distance}, the nature of the problem is complex and more conclusive statements requires further research.

\end{document}